\documentclass{article}



\usepackage[preprint, nonatbib]{neurips_2019}



\usepackage[utf8]{inputenc} 
\usepackage[T1]{fontenc}    
\usepackage{hyperref}       
\usepackage{url}            
\usepackage{booktabs}       
\usepackage{amsfonts}       
\usepackage{nicefrac}       
\usepackage{microtype}      

\usepackage{amsmath,amssymb,amsfonts}
\usepackage{graphicx}
\usepackage[colorinlistoftodos]{todonotes}
\usepackage{amsthm}
\usepackage{cleveref}

\usepackage[noend]{algpseudocode}
\usepackage{algorithm}
\usepackage{algorithmicx}
\usepackage{subcaption}

\usepackage{adjustbox}

\def\R{{\mathbb{R}}}
\def\pr{{\rm Pr}}
\def\E{{\mathbb E}}
\def\X{{\mathcal X}}
\def\Y{{\mathcal Y}}

\def\B{{\mathcal B}}
\def\bias{{\rm bias}}
\def\supp{{\rm supp}}
\def\adv{{\rm adv}}

\newcommand{\cA}{\mathcal{A}}
\newcommand{\cB}{\mathcal{B}}
\newcommand{\cC}{\mathcal{C}}
\newcommand{\I}{\mathcal{I}}

\newcommand{\eps}{\epsilon}
\newcommand{\sign}{\mbox{sign}}
\newcommand{\algname}{\textsc{AKNN}}

\newtheorem{theorem}{Theorem}
\newtheorem*{theorem*}{Theorem}
\newtheorem{lemma}[theorem]{Lemma}

\newcommand{\comment}[3]{}  

\newcommand{\shay}[1]{\comment{purple}{Shay}{#1}}
\newcommand{\akshay}[1]{\comment{blue}{Akshay}{#1}}
\newcommand{\yoav}[1]{\comment{cyan}{Yoav}{#1}}

\title{An adaptive nearest neighbor rule for classification}

\author{
Akshay Balsubramani \\
\texttt{abalsubr@stanford.edu} \\
\And
Sanjoy Dasgupta \\
\texttt{dasgupta@eng.ucsd.edu} \\
\And
Yoav Freund \\
\texttt{yfreund@eng.ucsd.edu} \\
\And
Shay Moran\\
\texttt{shaym@princeton.edu} \\
}
\begin{document}

\maketitle

\begin{abstract}
We introduce a variant of the $k$-nearest neighbor classifier in which $k$ is chosen adaptively for each query, rather than supplied as a parameter. The choice of $k$ depends on properties of each neighborhood, and therefore may significantly vary between different points. (For example, the algorithm will use larger $k$ for predicting the labels of points in noisy regions.)  

We provide theory and experiments that demonstrate that the algorithm performs comparably to, and sometimes better than, $k$-NN with an optimal choice of $k$. In particular, we derive bounds on the convergence rates of our classifier that depend on a local quantity we call the ``advantage'' which is significantly weaker than the Lipschitz conditions used in previous convergence rate proofs. These generalization bounds hinge on a variant of the seminal Uniform Convergence Theorem due to Vapnik and Chervonenkis; this variant concerns conditional probabilities and may be of independent interest. 
\end{abstract}

\section{Introduction}

We introduce an adaptive nearest neighbor classification rule. Given a
training set with labels $\{\pm 1\}$, its prediction at a query point $x$ is based on the training points closest to $x$, rather like
the $k$-nearest neighbor rule. However, the value of $k$ that it uses
can vary from query to query. Specifically, if there are $n$ training
points, then for any query $x$, the smallest $k$ is sought for which
the $k$ points closest to $x$ have labels whose average is either
greater than $+\Delta(n,k)$, in which case the prediction is $+1$, or
less than $- \Delta(n,k)$, in which case the prediction is $-1$; and
if no such $k$ exists, then ``?'' (``don't know'') is returned.  
Here, $\Delta(n,k) \sim \sqrt{(\log n)/k}$ corresponds to a confidence interval for the average label in the region around the query.

We study this rule in the standard statistical framework in which all data are
i.i.d.\ draws from some unknown underlying distribution $P$ on $\X
\times \Y$, where $\X$ is the data space and $\Y$ is the label
space. We take $\X$ to be a separable metric space, with distance
function $d: \X \times \X \rightarrow \R$, and we take $\Y =
\{\pm 1\}$. We can decompose $P$ into the
marginal distribution $\mu$ on $\X$ and the conditional expectation of
the label at each point $x$: if $(X,Y)$ represents a random draw from
$P$, define $\eta(x) = \E(Y| X = x)$. In this terminology, the
Bayes-optimal classifier is the rule $g^*: \X \rightarrow \{\pm 1\}$
given by
\begin{equation}
g^*(x) = 
\left\{
\begin{array}{ll}
\sign(\eta(x)) & \mbox{if $\eta(x) \neq 0$} \\
\mbox{either $-1$ or $+1$} & \mbox{if $\eta(x) = 0$}
\end{array}
\right.
\label{eq:bayes-opt}
\end{equation}
and its error rate is the Bayes risk, $R^* = \frac{1}{2}\E_{X \sim \mu} \left[1-|\eta(X)| \right]$. A variety of nonparametric classification schemes are known to have error rates that converge asymptotically to $R^*$. These include $k$-nearest neighbor (henceforth, $k$-NN) rules~\cite{FH51} in which $k$ grows with the number of training points $n$ according to a suitable schedule $(k_n)$, under certain technical conditions on the metric measure space $(\X, d, \mu)$.

In this paper, we are interested in consistency as well as rates of
convergence. In particular, we find that the adaptive nearest neighbor
rule is also asymptotically consistent (under the same technical
conditions) while converging at a rate that is about as good as,
and sometimes significantly better than, that of $k$-NN
under any schedule $(k_n)$.

Intuitively, one of the advantages of $k$-NN over nonparametric
classifiers that use a fixed bandwidth or radius, such as Parzen
window or kernel density estimators, is that $k$-NN automatically
adapts to variation in the marginal distribution $\mu$: in regions
with large $\mu$, the $k$ nearest neighbors lie close to the query
point, while in regions with small $\mu$, the $k$ nearest neighbors
can be further afield. The adaptive NN rule that we propose goes
further: it also adapts to variation in $\eta$. In certain regions of
the input space, where $\eta$ is close to $0$, an accurate
prediction would need large $k$. In other regions, where $\eta$ is
near $-1$ or $1$, a small $k$ would suffice, and in fact, a larger $k$
might be detrimental because neighboring regions might be labeled
differently. See Figure~\ref{fig:rationale} for one such example. A
$k$-NN classifier is forced to pick a single value of $k$ that trades
off between these two contingencies. Our adaptive NN rule, however,
can pick the right $k$ in each neighborhood separately.

\begin{figure}
\begin{center}
\includegraphics[width=2.75in]{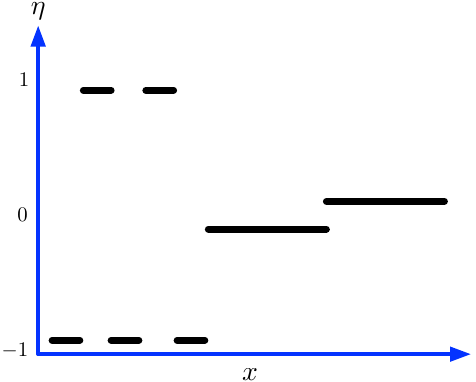}
\end{center}
\caption{For values of $x$ on the left half of the shown interval, the
  pointwise bias $\eta(x)$ is close to $-1$ or $1$, and thus a small value of $k$ will yield an accurate prediction. Larger $k$ will not do as well, because they may run into neighboring regions with different labels. For values of $x$ on the right half of the interval, $\eta(x)$ is close to $0$, and thus large $k$ is essential for accurate prediction.}
\label{fig:rationale}
\end{figure}

Our estimator allows us to give rates of convergence that are tighter and more 
transparent than those customarily obtained in nonparametric statistics. Specifically, for any point $x$ in the instance space $\X$, we define a notion of the {\it advantage at $x$}, denoted $\adv(x)$, which is rather like a local margin. We show that the prediction at $x$ is very likely to be correct once the number of training points exceeds~$\tilde{O}(1/\adv(x))$. Universal consistency follows by establishing that almost all points have positive advantage.

\subsection{Relation to other work in nonparametric estimation}

For linear separators and many other {\it parametric} families of classifiers, it is possible to give rates of convergence that hold without any assumptions on the input distribution $\mu$ or the conditional expectation function $\eta$. This is not true of nonparametric estimation: although any target function can in principle be captured, the number of samples needed to achieve a specific level of accuracy will inevitably depend upon aspects of this function such as how fast it changes~\cite[chapter 7]{DGL96}. As a result, nonparametric statistical theory has focused on (1) asymptotic consistency, ideally without assumptions, and (2) rates of convergence under a variety of smoothness assumptions.

Asymptotic consistency has been studied in great detail for the $k$-NN classifier, when $k$ is allowed to grow with the number of data points $n$. The risk of the classifier, denoted $R_n$, is its error rate on the underlying distribution $P$; this is a random variable that depends upon the set of training points seen. Cover and Hart~\cite{CH67} showed that in general metric spaces, under the assumption that every $x$ in the support of $\mu$ is either a continuity point of $\eta$ or has $\mu(\{x\}) > 0$, the expected risk $\E R_n$ converges to the Bayes-optimal risk $R^*$, as long as $k \rightarrow \infty$ and $k/n \rightarrow 0$. For points in finite-dimensional Euclidean space, a series of results starting with Stone~\cite{S77} established consistency without any assumptions on $\mu$ or $\eta$, and showed that $R_n \rightarrow R^*$ almost surely~\cite{DGKL94}. More recent work has extended these {\it universal consistency} results---that is, consistency without assumptions on $\eta$---to arbitrary metric measure spaces $(\X, d, \mu)$ that satisfy a certain differentiation condition~\cite{CG06,ChaudhuriDasgupta2014}.

Rates of convergence have been obtained for $k$-nearest neighbor classification under various smoothness conditions including Holder conditions on $\eta$~\cite{KP95,G81} and ``Tsybakov margin'' conditions~\cite{MT99,AT07,ChaudhuriDasgupta2014}. Such assumptions have become customary in nonparametric statistics, but they leave a lot to be desired. First, they are uncheckable: it is not possible to empirically determine the smoothness given samples. Second, they view the underlying distribution $P$ through the tiny window of two or three parameters, obscuring almost all the remaining structure of the distribution that also influences the rate of convergence. Finally, because nonparametric estimation is {\it local}, there is the intriguing possibility of getting different rates of convergence in different regions of the input space: a possibility that is immediately defeated by reducing the entire space to two smoothness constants.

The first two of these issues are partially addressed by recent work of \cite{ChaudhuriDasgupta2014}, who analyze the finite sample risk of $k$-NN classification without any assumptions on $P$. Their bounds involve terms that measure the probability mass of the input space in a carefully defined region around the decision boundary, and are shown to be ``instance-optimal'': that is, optimal for the specific distribution $P$, rather than minimax-optimal for some very large class containing $P$. However, the expressions for the risk are somewhat hard to parse, in large part because of the interaction between $n$ and $k$.

In the present paper, we obtain finite-sample rates of convergence that are ``instance-optimal'' not just for the specific distribution $P$ but also for the specific query point. This is achieved by defining a {\it margin}, or {\it advantage}, at every point in the input space, and giving bounds (Theorem~\ref{thm:pointwise-rate}) entirely in terms of this quantity. For parametric classification, it has become common to define a notion of margin that controls generalization. In the nonparametric setting, it makes sense that the margin would in fact be a function $\X \rightarrow \R$, and would yield different generalization error bounds in different regions of space. Our adaptive nearest neighbor classifier allows us to realize this vision in a fairly elementary manner.

\paragraph{Organization.} 
Proofs are relegated to the appendices.

We begin by formally defining the setup and notation in \Cref{sec:setup}.
Then, a formal description of the adaptive $k$-NN algorithm is given in~\Cref{sec:alg}.
In \Cref{sec:gen1,sec:gen2,sec:gen3}, we state and prove consistency and generalization
bounds for this classifier, and compare them with previous bounds in the $k$-NN literature.
These bounds exploit a general VC-based uniform convergence statement
which is presented and proved in a self-contained manner in \Cref{sec:ucecm}.

\section{Setup}\label{sec:setup}

Take the instance space to be a separable metric space $(\X, d)$ and the label space to be $\Y = \{\pm 1\}$. All data are assumed to be drawn i.i.d.\ from a fixed unknown distribution $P$ over $\X \times \Y$.

Let $\mu$ denote the marginal distribution on $\X$: if $(X,Y)$ is a 
random draw from $P$, then
$$ \mu(S) = \pr(X \in S)$$
for any measurable set $S \subseteq \X$. For any $x \in \X$, the conditional expectation, or {\em bias}, of $Y$ given $x$, is
$$ \eta(x) = \E(Y| X = x) \in [-1,1] .$$ 
Similarly, for any measurable set $S$ with $\mu(S) > 0$, the
conditional expectation of $Y$ given $X \in S$ is
$$ \eta(S) = \E(Y| X \in S) = \frac{1}{\mu(S)} \int_S \eta(x) \ d \mu(x) .$$

The risk of a classifier $g: \X \rightarrow \{-1,+1,?\}$ is the probability that it is incorrect on pairs $(X,Y) \sim P$,
\begin{equation}
R(g) = P(\{(x,y): g(x) \neq y\}).
\label{eq:risk}
\end{equation}
The Bayes-optimal classifier $g^*$, as given in (\ref{eq:bayes-opt}), depends only on $\eta$, but its risk $R^*$ depends on $\mu$. For a classifier $g_n$ based on $n$ training points from $P$, we will be interested in whether $R(g_n)$ converges to $R^*$, and the rate at which this convergence occurs.

The algorithm and analysis in this paper depend heavily on the probability masses and biases of balls in $\X$. For $x \in \X$ and $r \geq 0$, let $B(x,r)$ denote the closed ball of radius $r$ centered at $x$, 
$$ B(x,r) = \{ z \in \X : d(x,z) \leq r \} .$$
For $0 \leq p \leq 1$, let $r_p(x)$ be the smallest radius $r$ such that $B(x,r)$ has probability mass at least $p$, that is,
\begin{equation}
r_p(x) = \inf \{r \geq 0: \mu(B(x,r)) \geq p \}.
\label{eq:probability-radius}
\end{equation}
It follows that $\mu(B(x,r_p(x))) \geq p$.

The {\it support} of the marginal distribution $\mu$ plays an important role in convergence proofs and is formally defined as
$$ \supp(\mu) = \{x \in \X: \mbox{$\mu(B(x,r)) > 0$ for all $r > 0$}\} .$$
It is a well-known consequence of the separability of $\X$ that $\mu(\supp(\mu)) = 1$~\cite{CH67}.

\section{The adaptive $k$-nearest neighbor algorithm}\label{sec:alg}

The algorithm is given a labeled training set
$(x_1, y_1), \ldots, (x_n, y_n) \in \X \times \Y$.
Based on these points, it is able to compute empirical estimates of the probabilities and biases of different balls.

For any set $S \subseteq \X$, we define its empirical count and probability mass as
\begin{align}
\notag \#_n(S) &= |\{i: x_i \in S\}| \\
\mu_n(S) &= \frac{\#_n(S)}{n} .
\label{eq:empirical-mass}
\end{align}
If this is non-zero, we take the empirical bias to be
\begin{equation}
\eta_n(S) = \frac{\sum_{i: x_i \in S} y_i}{\#_n(S)} .
\label{eq:empirical-bias}
\end{equation}

\begin{figure}
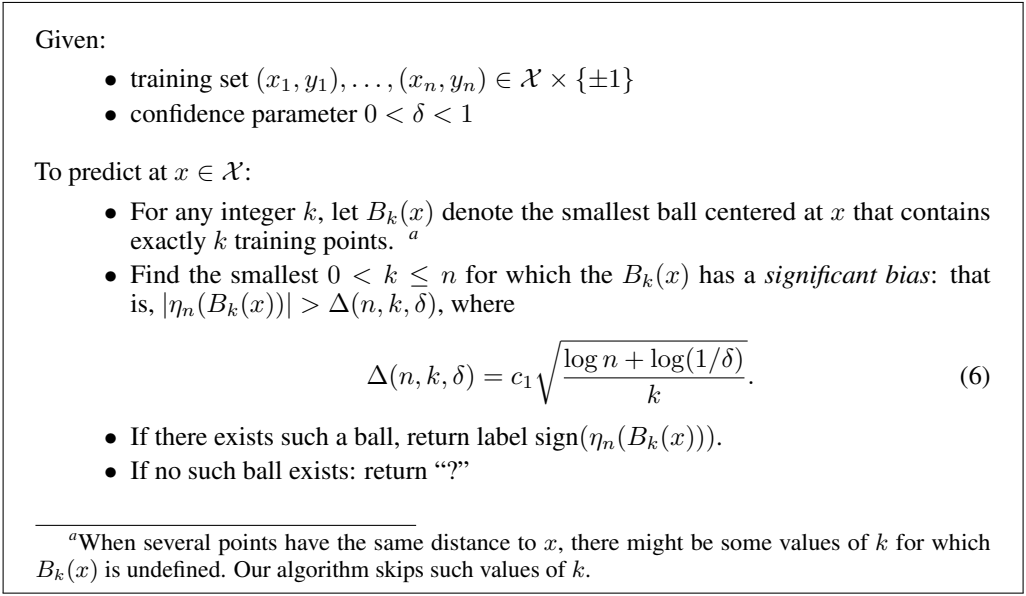

\fbox{\begin{minipage}{5.25in}
\vspace{.1in}
\begin{center}
\begin{minipage}{5in}
Given: 
\begin{itemize}
\item training set $(x_1, y_1), \ldots, (x_n, y_n) \in \X \times \{\pm 1\}$
\item confidence parameter $0 < \delta < 1$
\end{itemize}

\vspace{.1in}
To predict at $x \in \X$:
\begin{itemize}
\item For any integer $k$, let $B_k(x)$ denote the smallest ball centered at $x$ that contains exactly $k$ training points.
~\footnote{When several points have the same distance to $x$, there might be some values of $k$ for which $B_k(x)$ is undefined. Our algorithm skips such values of $k$.}
\item Find the smallest $0 < k \leq n$ for which the $B_k(x)$ has a {\em significant bias}: 
that is,~$\left|\eta_n(B_k(x))\right| > \Delta(n, k,\delta)$, where
\begin{equation}
\Delta(n,k,\delta) = c_1 \sqrt{\frac{\log n + \log (1/\delta)}{k}}.
\label{eq:delta-default}
\end{equation}
\item If there exists such a ball, return label $\sign(\eta_n(B_k(x)))$.
\item If no such ball exists: return ``?''
\end{itemize}
\vspace{.1in}
\end{minipage}
\end{center}
\end{minipage}}
\caption{The adaptive $k$-NN ($\algname$) classifier. The absolute constant $c_1$ is from Lemma~\ref{lemma:bias}.}
\label{fig:alg}
\end{figure}

The adaptive $k$-NN algorithm ($\algname$) is shown in Figure~\ref{fig:alg}. It makes a prediction at $x$ by growing a ball around $x$ until the ball has significant bias, and then choosing the corresponding label. In some cases, a ball of sufficient bias may never be obtained, in which event ``?'' is returned. In what follows, let $g_n: \X \rightarrow \{-1,+1,?\}$ denote the $\algname$ classifier.

Later, we will also discuss a variant of this algorithm in which a modified confidence interval,
\begin{equation}
\Delta(n,k,\delta) = c_1 \sqrt{\frac{d_0 \log n + \log (1/\delta)}{k}}
\label{eq:delta-modified}
\end{equation}
is used, where $d_0$ is the VC dimension of the family of balls in $(\X,d)$. 

\section{Pointwise advantage and rates of convergence}\label{sec:gen1}

We now provide finite-sample rates of convergence for the adaptive nearest neighbor rule. For simplicity, we give convergence rates that are specific to any query point $x$ and that depend on a suitable notion of the ``margin'' of distribution $P$ around $x$.

Pick any $p, \gamma > 0$. Recalling definition
(\ref{eq:probability-radius}), we say a point $x \in \X$ is $(p,
\gamma)$-salient
if the following holds for either $s=+1$ or $s=-1$:
\begin{itemize}
\item $s \eta(x) > 0$, and $s \eta(B(x,r)) > 0$ for all $r \in
  [0,r_p(x))$, and $s \eta(B(x,r_p(x))) \geq \gamma$.
\end{itemize}
In words, this means that $g^*(x)=s$ (recall that $g^*$ is the Bayes classifier),
that the biases of all balls of radius $\leq r_p(x)$ around $x$ have the same sign as $s$, and that
the bias of the ball of radius $r_p(x)$ has margin at least $\gamma$.
A point $x$ can satisfy this definition for a variety of pairs $(p,\gamma)$. The {\it advantage} of~$x$ is taken to be the largest value of $p\gamma^2$ over all such pairs:
\begin{equation}
\adv(x) = 
\left\{
\begin{array}{ll}
\sup \{ p \gamma^2: \mbox{$x$ is $(p,\gamma)$-salient}\} & \mbox{if $\eta(x) \neq 0$} \\
0 & \mbox{if $\eta(x) = 0$}
\end{array}
\right.
\label{eq:advantage}
\end{equation}
We will see (Lemma~\ref{lemma:positive-advantage}) that under a mild condition on the underlying metric measure space, almost all $x$ with $\eta(x) \neq 0$ have a positive advantage.

\subsection{Advantage-based finite-sample bounds}

The following theorem shows that for every point $x$,
if the sample size $n$ satisfies $n\gtrapprox 1/\adv(x)$,
then the label of $x$ is likely to be $g^*(x)$, where $g^*$ is the Bayes optimal classifier.
This provides a pointwise convergence of $g(x)$ to $g^*(x)$ with a rate which
is sensitive to the ``local geometry'' of~$x$.

\begin{theorem}[Pointwise convergence rate]
There is an absolute constant $C > 0$ for which the following holds.
Let $0 < \delta < 1$ denote the confidence parameter in the $\algname$ algorithm (\Cref{fig:alg}),
and suppose the algorithm is used to define a classifier $g_n$ based on $n$ training points chosen i.i.d.\ from $P$. 
Then, for every point $x\in\supp(\mu)$, if
\[n \geq \frac{C}{\adv(x)} \max \left( \log \frac{1}{\adv(x)}, \ \log \frac{1}{\delta} \right)\]
then with probability at least $1-\delta$ we have that $g_n(x)=g^*(x)$.
\label{thm:pointwise-rate}
\end{theorem}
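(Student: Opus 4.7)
My plan is to exploit the definition of $\adv(x)$ as a supremum: first pick a $(p,\gamma)$-salient witness at $x$ with $p\gamma^2 \geq \adv(x)/2$, set $B^\ast = B(x, r_p(x))$, and then argue on a single high-probability event that the algorithm returns $s := g^\ast(x)$. Saliency gives me exactly the two facts that drive the argument: $|\eta(B^\ast)| \geq \gamma$ with $\sign \eta(B^\ast) = s$, and $\sign \eta(B(x, r)) = s$ for every $r \leq r_p(x)$ (so that \emph{every} ball centered at $x$ with radius at most $r_p(x)$ has bias of the correct sign).

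The main concentration ingredient is Lemma \ref{lemma:bias}, which will give, uniformly in $k$, the deviation bound $|\eta_n(B_k(x)) - \eta(B_k(x))| \leq \Delta(n, k, \delta)$. I combine this with a Chernoff bound for the random count $N^\ast := \#_n(B^\ast)$ (yielding $N^\ast \geq np/2$) and a single-ball Hoeffding bound on the fixed ball $B^\ast$ (yielding $|\eta_n(B^\ast) - \eta(B^\ast)| \leq \gamma/2$). After adjusting constants, these three events hold simultaneously with probability at least $1-\delta$, and I condition on this intersection throughout the rest of the proof.

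On this event I first rule out the ``?'' output. The key observation is that $B_{N^\ast}(x)$ contains \emph{exactly} the training points that lie in $B^\ast$, so the empirical biases coincide: $\eta_n(B_{N^\ast}(x)) = \eta_n(B^\ast)$, and hence $|\eta_n(B_{N^\ast}(x))| \geq \gamma/2$. This exceeds $\Delta(n, N^\ast, \delta) \asymp \sqrt{(\log n + \log(1/\delta))/N^\ast}$ whenever $N^\ast \gamma^2 \gtrsim \log n + \log(1/\delta)$; using $N^\ast \geq np/2$ and $p\gamma^2 \geq \adv(x)/2$ this reduces to $n\cdot\adv(x) \gtrsim \log n + \log(1/\delta)$, which, upon replacing $\log n$ by $\log(1/\adv(x))$ at the threshold, is exactly the hypothesis of the theorem. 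So the algorithm selects some smallest valid $k^\ast \leq N^\ast$.

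For correctness of the sign, note that because $k^\ast \leq N^\ast$ the radius of $B_{k^\ast}(x)$ is at most $r_p(x)$, and saliency then gives $\sign \eta(B_{k^\ast}(x)) = s$. Meanwhile the algorithm's selection criterion gives $|\eta_n(B_{k^\ast}(x))| > \Delta(n, k^\ast, \delta) \geq |\eta_n(B_{k^\ast}(x)) - \eta(B_{k^\ast}(x))|$, so $\sign \eta_n(B_{k^\ast}(x)) = \sign \eta(B_{k^\ast}(x)) = s$, and the algorithm returns $s = g^\ast(x)$. The main technical obstacle is Lemma \ref{lemma:bias} itself: because each $B_k(x)$ is a \emph{random} ball (its radius is determined by the sample), a single-ball Hoeffding is insufficient, and one needs the VC-based uniform conditional-mean bound of \Cref{sec:ucecm} to control the entire empirical chain $B_1(x), \dots, B_n(x)$ at once. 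Granting that, the rest of the proof is routine bookkeeping about Chernoff/Hoeffding plus the transcendental solve of $n \adv(x) \geq C(\log n + \log(1/\delta))$.
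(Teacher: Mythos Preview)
Your proof is correct and follows the same skeleton as the paper's: show that the ball $B^\ast = B(x,r_p(x))$ (equivalently $B_{N^\ast}(x)$) has enough points and empirical bias exceeding the threshold, and then use the uniform bound of Lemma~\ref{lemma:bias} over the VC-dimension-$1$ family of balls centered at $x$, together with saliency, to show that the selected $k^\ast \leq N^\ast$ cannot produce the wrong sign. The only differences are cosmetic: the paper invokes the VC count bound (Lemma~\ref{lemma:points-in-balls}) where you use a single-ball Chernoff bound on the fixed $B^\ast$, which is a legitimate simplification here since only the bias bound needs to be uniform; and your separate single-ball Hoeffding on $B^\ast$ is redundant, since Lemma~\ref{lemma:bias} already delivers $|\eta_n(B^\ast)-\eta(B^\ast)|\leq \Delta(n,N^\ast,\delta)$.
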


If we further assume that the family of all balls in the space has finite VC dimension $d_o$ then 
we can strengthen \Cref{thm:pointwise-rate} so that the guarantee holds with high probability \underline{simultaneously} for all~$x\in \supp(\mu)$.
This is achieved by a modified version of the algorithm that uses confidence interval (\ref{eq:delta-modified}) instead of (\ref{eq:delta-default}).



\begin{theorem} [Uniform convergence rate]
Suppose that the set of balls in $(\X,d)$ has finite VC dimension $d_0$, and that the algorithm of Figure~\ref{fig:alg} is used with confidence interval (\ref{eq:delta-modified}) instead of (\ref{eq:delta-default}). 
Then, with probability at least $1-\delta$, the resulting classifier $g_n$ satisfies the following: 
for every point~$x\in\supp(\mu)$, if
\[n \geq \frac{C}{\adv(x)} \max \left( \log \frac{1}{\adv(x)}, \ \log \frac{1}{\delta} \right)\]
then $g_n(x)=g^*(x)$.
\label{thm:uniform-rate}
\end{theorem}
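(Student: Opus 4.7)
The approach is to lift the proof of Theorem~\ref{thm:pointwise-rate} from a single query point to all of $\supp(\mu)$ simultaneously, by replacing the per-point concentration step with a VC-based uniform concentration step over the entire family $\B$ of closed balls in $(\X,d)$, which by hypothesis has VC dimension $d_0$.

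First I would invoke two uniform convergence statements over $\B$: the classical Vapnik--Chervonenkis bound for the empirical masses $\mu_n(B)$, and the conditional-probability variant developed in \Cref{sec:ucecm} for the empirical biases $\eta_n(B)$. Together these yield a single event $E$ of probability at least $1-\delta$ on which, simultaneously for every ball $B\in\B$, the empirical mass $\mu_n(B)$ is Bernstein-close to $\mu(B)$ and $|\eta_n(B)-\eta(B)| \leq c_1\sqrt{(d_0\log n + \log(1/\delta))/\#_n(B)}$, i.e., exactly $\Delta(n,\#_n(B),\delta)$ as in (\ref{eq:delta-modified}). The replacement of $\log n$ by $d_0\log n$ is precisely the price of a Sauer-lemma union bound over the $O(n^{d_0})$ distinct intersections of balls with the sample, rather than a union over the $O(n)$ values of $k$ considered at a fixed query point.

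Next I would condition on $E$ and simply replay the pointwise analysis of Theorem~\ref{thm:pointwise-rate} for an arbitrary $x \in \supp(\mu)$ satisfying the hypothesis. By $(p,\gamma)$-saliency with $p\gamma^2$ arbitrarily close to $\adv(x)$, the ball $B(x,r_p(x))$ has true mass at least $p$, true bias of magnitude at least $\gamma$, and every sub-ball has bias of the same sign $s=g^*(x)$. The mass concentration on $E$ forces $B_k(x) \subseteq B(x,r_p(x))$ for some integer $k$ of order $np$; the uniform bias concentration on $E$ then gives $|\eta_n(B_k(x)) - \eta(B_k(x))| \leq \Delta(n,k,\delta)$; and the assumed lower bound $n \geq (C/\adv(x))\max(\log(1/\adv(x)),\log(1/\delta))$ makes $\Delta(n,k,\delta) < \gamma/2$, so that \algname\ necessarily halts (possibly at an even smaller value of $k$) returning the sign $s=g^*(x)$. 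Because the argument depends on the sample only through the event $E$, the conclusion holds for all admissible $x$ simultaneously.

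The main obstacle is the uniform bound on the conditional means $\eta_n(B) = \sum_{i:x_i\in B} y_i / \#_n(B)$: unlike $\mu_n(B)$, this is a ratio with a random denominator, so the standard VC covering/symmetrization argument does not apply off the shelf, and a naive reduction loses a factor that is too costly. Handling this cleanly is exactly the role of the conditional-probability variant of Vapnik--Chervonenkis uniform convergence proved in \Cref{sec:ucecm}; once that tool is in hand, the rest of the proof is a line-by-line rerun of the pointwise argument with the quantifier ``with probability $1-\delta$'' moved outside the quantifier over $x$.
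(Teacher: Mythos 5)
Your proposal matches the paper's argument: condition once on the uniform good event over all balls in $(\X,d)$ (whose VC dimension is $d_0$, giving the $d_0\log n$ in the modified confidence interval), then replay the pointwise argument of Theorem~\ref{thm:pointwise-rate} for each $x$ on that single event, so the probability quantifier moves outside the quantifier over $x$. This is exactly what the paper does, noting only that Lemmas~\ref{lemma:points-in-balls} and \ref{lemma:bias} are now applied to the class of all balls (VC dimension $d_0$) rather than to the class of balls centered at a fixed $x$ (VC dimension~$1$).
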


\shay{An observation which may be relevant: 
I think we can derive a rather clean statement by assuming that $\eta$ is {\it uniformly continuous}.
Recall that uniform continuity means that the quantitative relation between $\eps$ and $\delta$
in the definition of continuity is uniform over all $x\in \X$ 
(i.e.\ $(\forall \epsilon)(\exists\delta)(\forall x,y):d(x,y)<\delta \implies \lvert\eta(x)-\eta(y) \rvert<\epsilon$.)
Assuming this should give us uniform convergence of $g^*(x)\to g(x)$ for every $x$ outside the boundary of $\eta$:
namely, that for any $\alpha>0$ there exists $n(\alpha)$ 
such that if the algorithm is given $n(\alpha)$ samples then w.h.p it will predict like the Bayes optimal rule
for every $x$ such that $\lvert\eta(x)\rvert \geq \alpha$. }
\yoav{I think that continuity is a sufficient, but not a necessary condition, you can have
  significant advantage everywhere and continuity nowhere - just add
  $\epsilon>0$ to $\eta$ on some dense set of measure zero.}

A key step towards proving Theorems~\ref{thm:pointwise-rate} and \ref{thm:uniform-rate} is to identify the subset of $\X$ that is likely to be correctly classified for a given number of training points $n$. This follows the rough outline of \cite{ChaudhuriDasgupta2014}, which gave rates of convergence for $k$-nearest neighbor, but there are two notable differences. First, we will see that the likely-correct sets obtained in that earlier work (for $k$-NN) are subsets of those we obtain for the new adaptive nearest neighbor procedure. Second, the proof for our setting is considerably more streamlined; for instance, there is no need to devise tie-breaking strategies for deciding the identities of the $k$ nearest neighbors.
\shay{Perhaps add something about the uniform convergence statement which we use? 
I think this is another interesting feature.}

\subsection{A comparison with $k$-nearest neighbor}
\label{sec:knn-comparison}

For $a \geq 0$, let $\X_a$ denote all points with advantage greater than $a$:
\begin{equation}
\X_a = \{x \in \supp(\mu): \adv(x) > a \} .
\label{eq:advantage-set}
\end{equation}
In particular, $\X_0$ consists of all points with positive advantage. 

By Theorem~\ref{thm:pointwise-rate}, points in $\X_a$ are likely to be correctly classified when the number of training points is~$\widetilde{\Omega}(1/a)$, where the $\widetilde{\Omega}(\cdot)$ notation ignores logarithmic terms.
In contrast, the work of \cite{ChaudhuriDasgupta2014} showed that with $n$ training points, the $k$-NN classifier is likely to correctly classify the following set of points:
\begin{align*}
\X'_{n,k} &= \{x \in \supp(\mu): \eta(x) > 0, \eta(B(x,r)) \geq k^{-1/2} \mbox{\ for all $0 \leq r \leq r_{k/n}(x)$}\} \\
&\ \cup \{x \in \supp(\mu): \eta(x) < 0, \eta(B(x,r)) \leq -k^{-1/2} \mbox{\ for all $0 \leq r \leq r_{k/n}(x)$}\} .
\end{align*}
Such points are $(k/n, k^{-1/2})$-salient and thus have advantage at least $1/n$. In fact,
$$ \bigcup_{1 \leq k \leq n} \X'_{n,k} \subseteq \X_{1/n} .$$
In this sense, the adaptive nearest neighbor procedure is able to perform roughly as well as all choices of $k$ simultaneously (logarithmic factors prevent this from being a precise statement).

\yoav{In fact, in my experiments, AKNN performs {\em almost} as well
  as the best choice of $k$. Can we account for this difference? We
  need to make a statement that does not contradict the expriments...}

\section{Universal consistency}\label{sec:gen2}
\label{sec:universal-consistency}

In this section we study the convergence of $R(g_n)$ to the Bayes risk $R^*$ as the number of points $n$ grows. An estimator is described as universally consistent in a metric measure space $(\X, d, \mu)$ if it has this desired limiting behavior for all conditional expectation functions $\eta$.

Earlier work~\cite{ChaudhuriDasgupta2014} has established the universal consistency of $k$-nearest neighbor (for $k/n \rightarrow 0$ and $k/(\log n) \rightarrow \infty$) in any metric measure space that satisfies the Lebesgue differentiation condition: that is, for any bounded measurable $f: \X \rightarrow \R$ and for almost all ($\mu$-a.e.) $x \in \X$,
\begin{equation}
\lim_{r \downarrow 0} \frac{1}{\mu(B(x,r))} \int_{B(x,r)} f \ d\mu = f(x) .
\label{eq:lebesgue-condition}
\end{equation}
This is known to hold, for instance, in any finite-dimensional normed space or any doubling metric space~\cite[Chapter 1]{H01}.

We will now see that this same condition implies the universal consistency of the adaptive nearest neighbor rule. To begin with, it implies that almost every point has a positive advantage.
\begin{lemma}
Suppose metric measure space $(\X, d, \mu)$ satisfies condition (\ref{eq:lebesgue-condition}). Then, for any conditional expectation $\eta$, the set of points
$$ \{x \in \X: \eta(x) \neq 0, \, \adv(x) = 0\}$$
has zero $\mu$-measure.
\label{lemma:positive-advantage}
\end{lemma}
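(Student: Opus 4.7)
The plan is to show that for $\mu$-almost every $x\in\supp(\mu)$ with $\eta(x)\neq 0$, one can exhibit an explicit pair $(p,\gamma)$ with $p,\gamma>0$ making $x$ a $(p,\gamma)$-salient point, which forces $\adv(x)\geq p\gamma^2>0$. The key input is the Lebesgue differentiation condition (\ref{eq:lebesgue-condition}), applied to the bounded measurable function $f=\eta$: for $\mu$-a.e.\ $x$,
\[
\lim_{r \downarrow 0} \eta(B(x,r)) = \lim_{r\downarrow 0} \frac{1}{\mu(B(x,r))}\int_{B(x,r)} \eta \,d\mu = \eta(x).
\]
Combined with the fact that $\mu(\supp(\mu))=1$ and $\mu(B(x,r))>0$ for all $r>0$ when $x\in\supp(\mu)$, we may work with a full-measure set of ``good'' points at which both (i) every open ball has positive mass and (ii) the ball-averages of $\eta$ converge to $\eta(x)$.

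For each good $x$ with $\eta(x)\neq 0$, set $s=\sign(\eta(x))$ and $\gamma=|\eta(x)|/2>0$. Using the convergence of averages, pick $r_0>0$ small enough that $s\,\eta(B(x,r))\geq\gamma$ for every $r\in(0,r_0]$. Then define $p=\mu(B(x,r_0))>0$. By the definition of $r_p$ in (\ref{eq:probability-radius}), we have $r_p(x)\leq r_0$, and the right-continuity of $r\mapsto\mu(B(x,r))$ (which holds for closed balls under any Borel probability measure) guarantees $\mu(B(x,r_p(x)))\geq p>0$ so that $\eta(B(x,r_p(x)))$ is well defined. Since every $r\in(0,r_p(x)]$ lies in $(0,r_0]$, all the conditions in the definition of $(p,\gamma)$-salience are met: $s\eta(x)>0$, $s\eta(B(x,r))>0$ on $[0,r_p(x))$ (vacuous at $r=0$ if there is no atom, and equal to $s\eta(x)$ otherwise), and $s\eta(B(x,r_p(x)))\geq\gamma$. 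Hence $\adv(x)\geq p\gamma^2>0$.

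Putting the pieces together, the set $\{x\in\X:\eta(x)\neq 0,\,\adv(x)=0\}$ is contained in the union of $\X\setminus\supp(\mu)$ and the set of $x$ at which the Lebesgue differentiation fails for $\eta$. Both have $\mu$-measure zero—the first by separability, the second by hypothesis (\ref{eq:lebesgue-condition})—so the whole set has $\mu$-measure zero, as required.

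The only mildly delicate point, and the one that needs care in writing, is the boundary case $r_p(x)=0$, which can occur when $\mu$ has an atom at $x$. In that case one just notes that $B(x,0)=\{x\}$ has $\mu$-mass $\geq p$, so $\eta(B(x,0))=\eta(x)$ and the salience conditions are immediate. Beyond that, the argument is a routine application of Lebesgue differentiation; there is no deep obstacle.
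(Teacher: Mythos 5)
Your argument is correct and follows the paper's proof essentially verbatim: apply Lebesgue differentiation to $f=\eta$ on the full-measure set of points in $\supp(\mu)$, choose $\gamma=|\eta(x)|/2$ and a small $r_0$ so the ball averages stay $\gamma$-bounded away from zero, then set $p=\mu(B(x,r_0))>0$ and invoke $(p,\gamma)$-salience. The extra care you take with right-continuity of $r\mapsto\mu(B(x,r))$, with $r_p(x)\le r_0$, and with the atom/no-atom case at $r=0$ is welcome rigor that the paper elides but does not change the argument.
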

\begin{proof}
Let $\X' \subseteq \X$ consist of all points $x \in \supp(\mu)$ for which condition (\ref{eq:lebesgue-condition}) holds true with $f=\eta$, that is, $\lim_{r \downarrow 0} \eta(B(x,r)) = \eta(x)$. 
Since $\mu(\supp(\mu))=1$, it follows that $\mu(\X') = 1$. 

Pick any $x \in \X'$ with $\eta(x) \neq 0$; without loss of generality, $\eta(x) > 0$. By (\ref{eq:lebesgue-condition}), there exists $r_o > 0$ such that
$$\eta(B(x,r)) \geq \eta(x)/2 \mbox{\  for all \ } 0 \leq r \leq r_o.$$
Thus $x$ is $(p,\gamma)$-salient for $p = \mu(B(x,r_o)) > 0$ and $\gamma = \eta(x)/2$, and has positive advantage.
\end{proof}

Universal consistency follows as a consequence; the proof details are deferred to ~\Cref{sec:proof-outline}.
\begin{theorem}[Universal consistency]
Suppose the metric measure space $(\X, d, \mu)$ satisfies condition~(\ref{eq:lebesgue-condition}). Let $(\delta_n)$ be a sequence in $[0,1]$ with (1) $\sum_n \delta_n < \infty$ and (2) $\lim_{n \rightarrow \infty} (\log (1/\delta_n))/n = 0$. Let the classifier $g_{n, \delta_n}: \X \rightarrow \{-1,+1,?\}$ be the result of applying the $\algname$ procedure (Figure~\ref{fig:alg}) with $n$ points chosen i.i.d.\ from $P$ and with confidence parameter $\delta_n$. Letting $R_n = R(g_{n,\delta_n})$ denote the risk of $g_{n,\delta_n}$, we have $R_n \rightarrow R^*$ almost surely.
\label{thm:universal-consistency}
\end{theorem}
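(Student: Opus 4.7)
The plan is to deduce risk convergence from pointwise convergence of $g_{n,\delta_n}$ to $g^*$ on a $\mu$-full subset of $\X$, combining Theorem~\ref{thm:pointwise-rate} with Borel--Cantelli, a Fubini quantifier swap, and dominated convergence. The whole argument keys off Lemma~\ref{lemma:positive-advantage}: the Lebesgue condition~\eqref{eq:lebesgue-condition} implies $\adv(x) > 0$ for $\mu$-almost every $x$ with $\eta(x) \neq 0$, so Theorem~\ref{thm:pointwise-rate} has content at $\mu$-almost every point of the effective support.

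The second step is per-point Borel--Cantelli followed by a Fubini swap. Fix $x$ with $\adv(x) > 0$. Because $\log(1/\delta_n) = o(n)$, the sample-size threshold $\tfrac{C}{\adv(x)}\max\bigl(\log\tfrac{1}{\adv(x)},\log\tfrac{1}{\delta_n}\bigr)$ of Theorem~\ref{thm:pointwise-rate} is eventually at most $n$, yielding $\Pr_{S_n}[g_{n,\delta_n}(x) \neq g^*(x)] \leq \delta_n$ for all large $n$. Summability $\sum_n \delta_n < \infty$ and Borel--Cantelli together give: $\mathbb{P}$-almost surely over the infinite training sequence, $g_{n,\delta_n}(x) = g^*(x)$ for all sufficiently large $n$. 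To promote this per-$x$ ``almost sure'' into a single exceptional training-sample set, I would apply Fubini to the event $E = \{(x,\omega) : g_{n,\delta_n}(x;\omega) \neq g^*(x) \text{ i.o.}\}$, whose $x$-slices all have $\mathbb{P}$-measure zero by the previous line; hence $(\mu \times \mathbb{P})(E) = 0$, and therefore $\mathbb{P}$-almost surely the set $\{x : g_{n,\delta_n}(x;\omega) \not\to g^*(x)\}$ is $\mu$-null within $\{\eta \neq 0\}$.

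Finally, I would convert pointwise consistency into risk consistency via
\[
R_n - R^* = \int_\X \bigl(\Pr[g_{n,\delta_n}(X) \neq Y \mid X = x] - \Pr[g^*(X) \neq Y \mid X = x]\bigr)\, d\mu(x),
\]
a $[-1,1]$-bounded integrand that vanishes whenever $g_{n,\delta_n}(x) = g^*(x)$ and is at most $1/2$ on $\{\eta=0\}$ (the latter only when the algorithm abstains). On the full-probability set of training sequences from the previous step, the integrand tends to zero for $\mu$-a.e.\ $x$ with $\eta(x)\neq 0$, so the dominated convergence theorem yields that the $\{\eta \neq 0\}$-contribution tends to zero almost surely. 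The residual $\{\eta = 0\}$ bookkeeping is deferred to the appendix and again rides on the Lebesgue condition.

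The hard part will be the Fubini quantifier swap: Theorem~\ref{thm:pointwise-rate} produces per-$x$ exceptional training sets, whereas risk convergence requires a single training-sample exceptional set on which the classifier agrees with $g^*$ simultaneously for $\mu$-a.e.\ $x$. The two hypotheses on $(\delta_n)$ are exactly tuned for this: summability $\sum_n \delta_n < \infty$ is the Borel--Cantelli input that upgrades ``probability $1-\delta_n$'' to ``almost sure'', while $\log(1/\delta_n) = o(n)$ makes Theorem~\ref{thm:pointwise-rate}'s sample-size threshold achievable as $n \to \infty$ for every fixed $x$ with positive advantage, regardless of how small that advantage is.
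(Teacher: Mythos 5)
Your argument is correct but takes a genuinely different route from the paper's. The paper first proves an explicit finite-$n$ risk bound (Lemma~\ref{lemma:advantage-set-convergence}): applying Theorem~\ref{thm:pointwise-rate} at each $x \in \X_a$, using a Fubini swap and Markov's inequality to pass from $\E_n \E_X[1(g_n(X) \neq g^*(X)) \mid X \in \X_a] \leq \delta^2$ to a high-probability bound $\pr_X(g_n(X) \neq g^*(X)\mid X \in \X_a) \leq \delta$, and thence $R(g_n) - R^* \leq \delta + \mu(\X_0\setminus \X_a)$ with probability $1-\delta$. It then runs Borel--Cantelli over $n$ (using $\sum_n \delta_n < \infty$), chooses $a_n\downarrow 0$ achievable by the sample-size threshold (using $\log(1/\delta_n) = o(n)$), and uses continuity of measure to send $\mu(\X_0 \setminus \X_{a_n})\to 0$. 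Your proof inverts the order of operations: Borel--Cantelli is applied per fixed $x$ first (on the nested filtration of training samples), then Fubini upgrades the per-$x$ almost-sure statement to a single full-probability training set on which $g_n\to g^*$ pointwise $\mu$-a.e.\ on $\{\eta\neq 0\}$, and dominated convergence closes the risk gap. Both proofs use the two hypotheses on $(\delta_n)$ for exactly the same reasons, and both turn on Lemma~\ref{lemma:positive-advantage} to make the advantage positive almost everywhere. What the paper's route buys is the quantitative Lemma~\ref{lemma:advantage-set-convergence}, a finite-sample risk bound of independent interest; what your route buys is a cleaner, more textbook derivation of the purely asymptotic statement, at the cost of forgoing that intermediate finite-sample result.

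One small caution on your final paragraph: the residual $\{\eta = 0\}$ contribution does not in fact ``ride on the Lebesgue condition.'' Lemma~\ref{lemma:positive-advantage} only concerns points with $\eta(x)\neq 0$, and condition~\eqref{eq:lebesgue-condition} at a point with $\eta(x)=0$ gives $\eta(B(x,r))\to 0$, which if anything pushes the algorithm toward abstaining there. You correctly observe that the integrand can be $1/2$ on $\{\eta=0\}$ when the algorithm outputs ``?''; the paper's own proof simply asserts that ``both $g_n$ and $g^*$ incur the same risk'' on $\{\eta=0\}$, which tacitly treats abstention as incurring cost $1/2$ rather than $1$. Under that convention the residual is identically zero and your argument closes cleanly; under the literal definition $R(g) = P(\{(x,y):g(x)\neq y\})$ the residual does not automatically vanish and neither your sketch nor the paper's proof handles it. Since this is an issue shared with (and inherited from) the paper, it is not a flaw specific to your approach, but the attribution to the Lebesgue condition is a red herring you should drop.
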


\shay{I want to see if we can make the point that the constraints on $\delta_n$ are rather intuitive:
regarding item (2): note that if $\delta_n << 2^{-n}$ is tiny then we essentially grow a ball around $x$
until one of the labels has an overwhelming majority of, say $n$. This makes no sense\ldots}

\begin{figure}[t]
\begin{center}
\includegraphics[width=0.9\linewidth, height=1.8in]{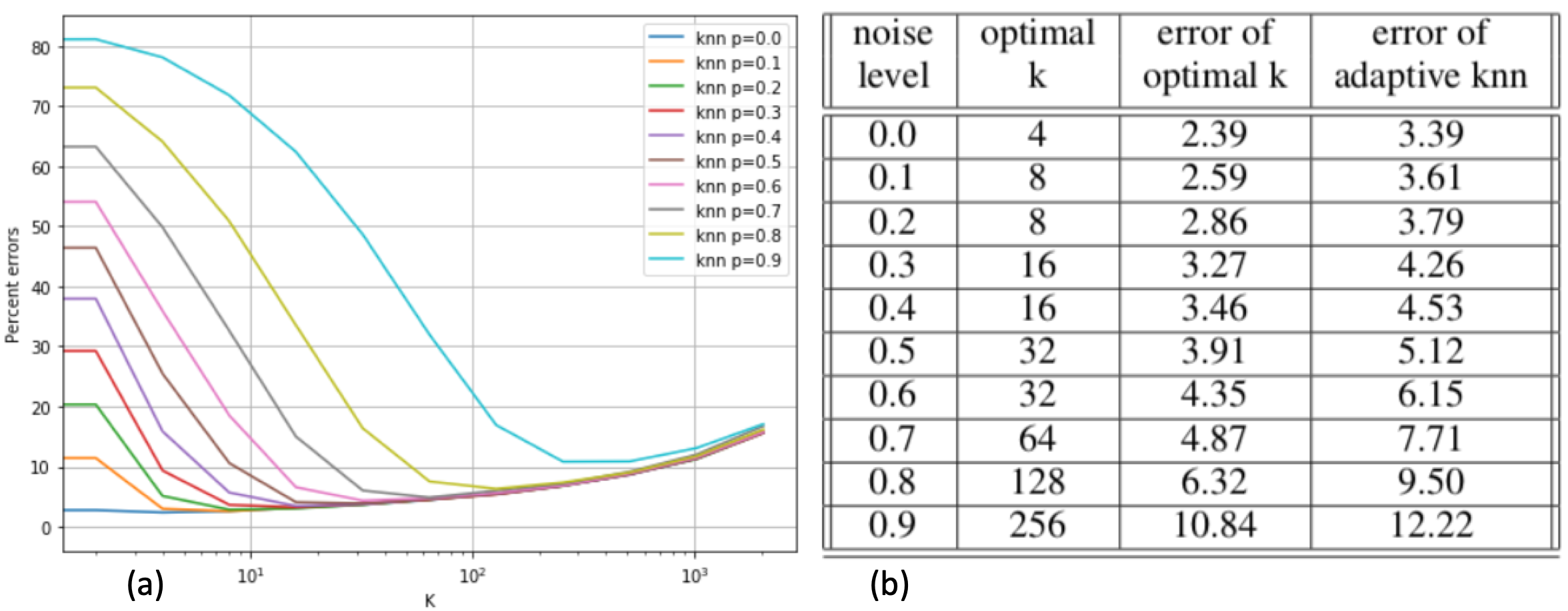}
\end{center}
\caption{
  Effect of label noise on $k$-NN and $\algname$.
  Performance on MNIST for different levels of random label noise $p$ and for different values of $k$. 
  Each line in the figure on the left {\bf (a)} represents the performance of $k$-NN as a function of $k$ for a given level of noise. 
  The optimal choice of $k$ increases with the noise level, and that the performance degrades severely for too-small $k$. 
  The table {\bf (b)} shows that $\algname$, with a fixed value of $A$, performs almost as well as $k$-NN with the optimal choice of $k$.
  }
\label{fig:mnist}
\end{figure}

\begin{figure}[th]
    \begin{subfigure}{0.5\textwidth}
    \centering
        \includegraphics[width=\linewidth]{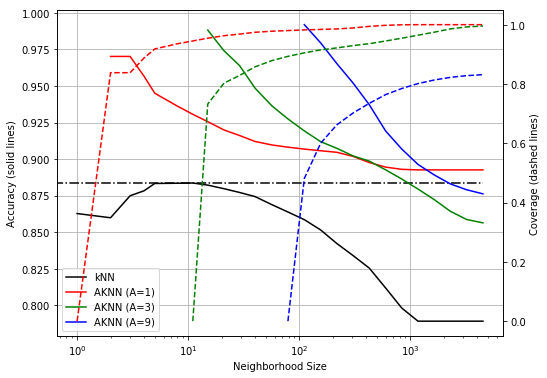}
    \end{subfigure}
    \hspace{0.2cm}
    \begin{minipage}{0.45\textwidth}
    At left: performance of $\algname$ on notMNIST for different settings of the confidence parameter ($A=1,3,9$), as a function of the neighborhood size.
    For each confidence level we show two graphs: an accuracy graph (solid line) and a coverage line (dashed line). 
    For each value of $k$ we plot the accuracy and the coverage of $\algname$ which is restricted to using a neighborhood size of at most $k$. 
    Increasing $A$ generally causes an increase in the accuracy and a decrease in coverage. 
    Larger values of $A$ cause $\algname$ to have coverage zero for values of $k$ that are too small. 
    For comparison, we plot the performance of $k$-NN as a function of $k$. 
    The highest accuracy ($\approx$ 0.88) is achieved for $k=10$ (dotted horizontal line), and is surpassed by $\algname$ with high coverage (100\% for $A=1$). 
    \end{minipage} 
    \caption{Performance of $\algname$ on notMNIST. See also Figure \ref{fig:varyingadak}.}
    \label{fig:aknnvsknn}
\end{figure}

\section{Experiments}

\newcommand{\labels}{\mathcal{Y}}

We performed a few experiments using real-world datasets from computer vision and genomics (see Section \ref{sec:experimentappendix}). 
These were conducted with some practical alterations to the algorithm of Fig.~\ref{fig:alg}. 

{\bf Multiclass extension:} Suppose the set of possible labels is  $\labels$. 
We replace the binary rule ``find the smallest $k$ such that $\left|\eta_n(B_k(x))\right| > \Delta(n, k,\delta)$" with the rule: ``find the smallest $k$ such that $\eta^y_n(B_k(x)) - \frac{1}{|\labels|} > \Delta(n, k,\delta)$ 
for some $y \in \labels$,  where 
$\eta^y_n(S) \doteq \frac{\#_n\{x_i \in S \mbox{ and } y_i = y\}}{\#_n(S)}$."
\newline
{\bf Parametrization:} 
We replace Equation~(\ref{eq:delta-default}) with $\Delta = \frac{A}{\sqrt{k}}$, where $A$ is a confidence parameter corresponding to the theory's $\delta$ (given $n$). \newline
{\bf Resolving multilabel predictions:} 
Our algorithm can output answers that are not a single label. 
The output can be ``?'', which indicates that no label has sufficient evidence. 
It can also be a subset of $\labels$ that contains more than one element, indicating that more than one label has significant evidence. 
In some situations, using subsets of the labels is more informative. 
However, when we want to compare head-to-head with $k$-NN, we need to output a single label. 
We use a heuristic to predict with a single label $y \in \labels$ on any $x$: the label for which $\max_k \eta^y_n(B_k(x))/\sqrt{k}$ is largest.






\begin{figure}[th]
    \centering
    \begin{subfigure}{0.5\textwidth}
    \centering
        \includegraphics[width=\linewidth]{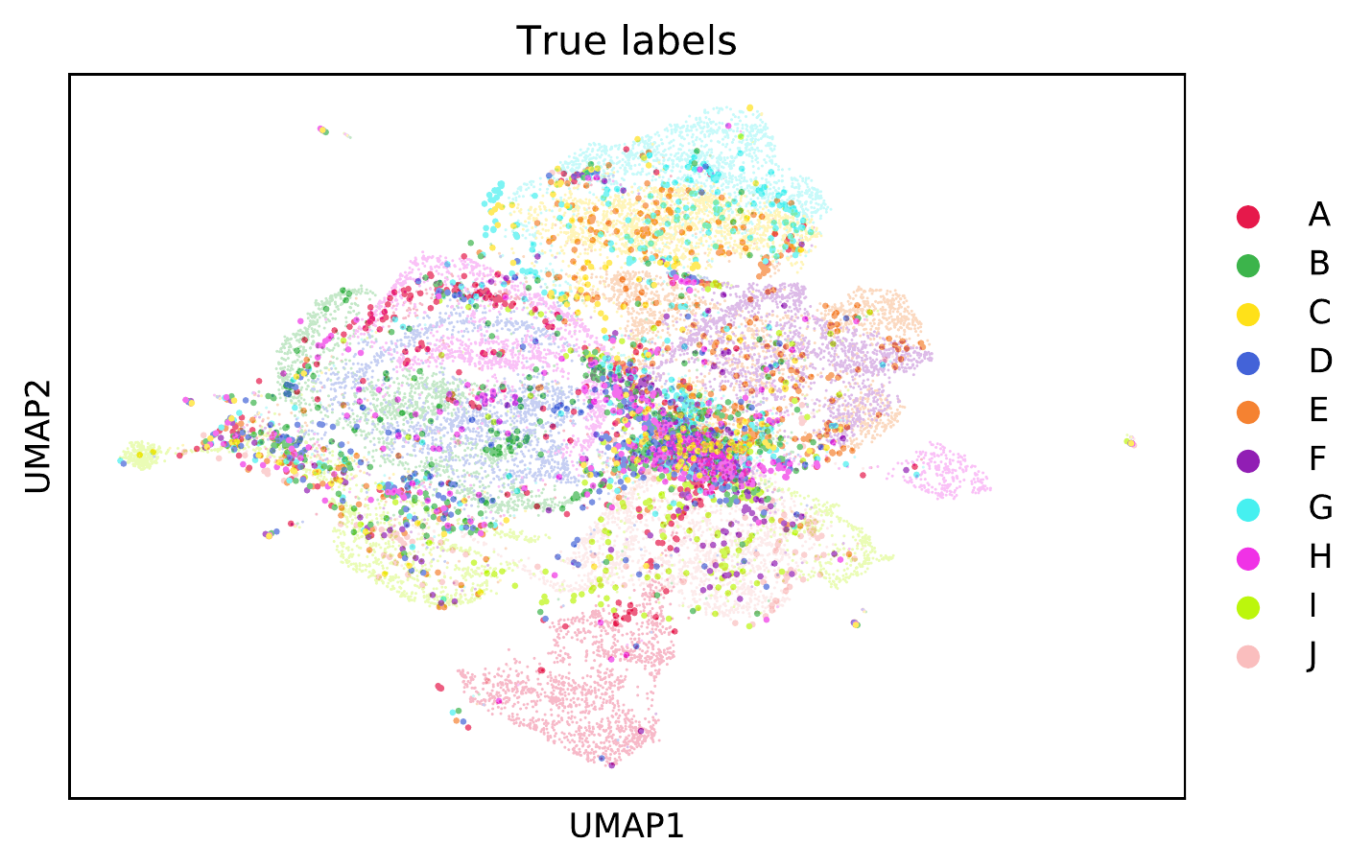}
    \end{subfigure}
    \begin{subfigure}{0.45\textwidth}
    \centering
        \includegraphics[width=\linewidth]{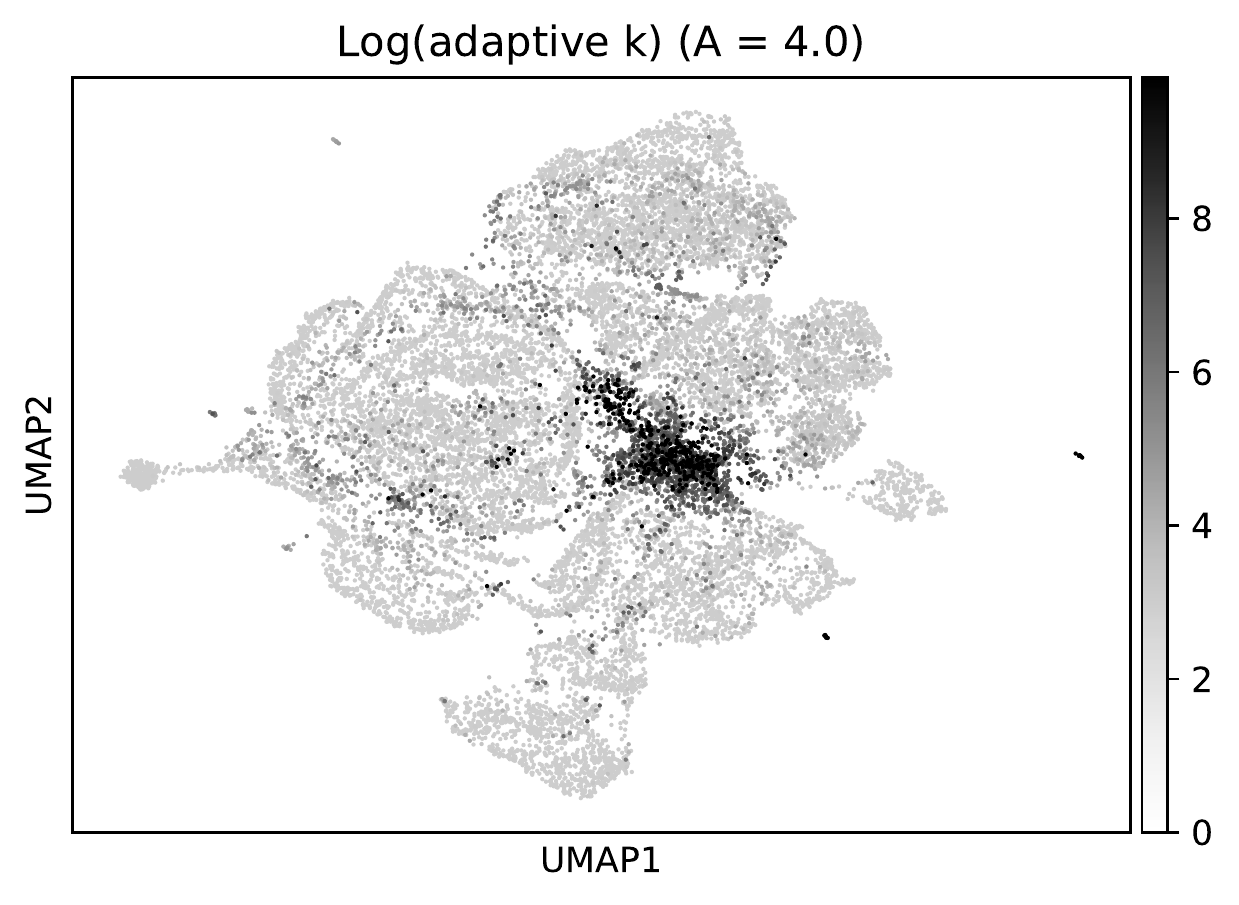}
    \end{subfigure}
    \caption{ A visualization of the performance of $\algname$ on notMNIST.
      {\bf (a)} The correct labels, with prediction errors of $\algname \;(A = 4)$ highlighted. 
      {\bf (b)} The value of $k$ chosen by the algorithm when predicting each datapoint. Zooming in reveals more details. An interactive explorer for our experiments is available at \texttt{http://35.239.251.24/aknn/}.}
  \label{fig:varyingadak}
\end{figure}

We briefly discuss our main conclusions from the experiments, with further details deferred to \Cref{sec:experimentappendix}.

{\bf $\algname$ is comparable to the best $k$-NN rule.}
In Section~\ref{sec:knn-comparison} we prove that $\algname$ compares favorably to $k$-NN with any fixed $k$. 
We demonstrate this in practice in different situations. With simulated independent label noise on the MNIST dataset (Fig.~\ref{fig:mnist}), a small value of $k$ is optimal for noiseless data, but performs very poorly when the noise level is high. 
On the other hand, $\algname$ adapts to the local noise level automatically, as demonstrated without adding noise on the more challenging notMNIST and single-cell genomics data (Fig. \ref{fig:aknnvsknn}, \ref{fig:varyingadak}, \ref{fig:aknnvsknn_muris}). 

{\bf Varying the confidence parameter $A$ controls abstaining.}
The parameter $A$ controls how conservative the algorithm is in deciding to abstain, instead of incurring error by predicting. 
$A \to 0$ represents the most aggressive setting, in which the algorithm never abstains, essentially predicting according to a $1$-NN rule. 
Higher settings of $A$ cause the algorithm to abstain on some of these predicted points, for which there is no sufficiently small neighborhood with a sufficiently significant label bias (Fig. \ref{fig:varyingparam}).

{\bf Adaptively chosen neighborhood sizes reflect local confidence.}
The number of neighbors chosen by $\algname$ is a local quantity that gives a practical pointwise measure of the confidence associated with label predictions. 
Small neighborhoods are chosen when one label is measured as significant nearly as soon as statistically possible; by definition of the $\algname$ stopping rule, this is not true where large neighborhoods are necessary. 
In our experiments, performance on points with significantly higher neighborhood sizes dropped monotonically, with the majority of the dataset having performance significantly exceeding the best $k$-NN rule over a range of settings of $A$ (Fig. \ref{fig:aknnvsknn}, \ref{fig:aknnvsknn_muris}; \Cref{sec:experimentappendix}).

\bibliography{ref}
\bibliographystyle{alpha}

\newpage
\appendix

\section{Analysis and proofs}\label{sec:gen3}
\label{sec:proof-outline}

The first step in establishing advantage-dependent rates of convergence is to bound the accuracy of empirical estimates of probability mass and bias. This is achieved by a careful choice of large deviation bounds.

\subsection{Large deviation bounds}

Suppose we draw $n$ points $(x_1, y_1), \ldots, (x_n, y_n)$ from $P$. If $n$ is reasonably large, we would expect the empirical mass $\mu_n(S)$ of any set $S \subset \X$, as defined in (\ref{eq:empirical-mass}), to be close to its probability mass under $\mu$. The following lemma, from \cite{ChaudhuriDasgupta2010}, quantifies one particular aspect of this.
\begin{lemma}[\cite{ChaudhuriDasgupta2010}, Lemma 7]
There is a universal constant $c_o$ such that the following holds. Let~$\B$ be any class of measurable subsets of $\X$ of VC dimension $d_0$. Pick any $0 < \delta < 1$. Then with probability at least $1-\delta^2/2$ over the choice of $(x_1, y_1), \ldots, (x_n, y_n)$, for all $B \in \B$ and for any integer $k$, we have
$$ \mu(B) \geq \frac{k}{n} + \frac{c_o}{n} \max \left( k, d_0 \log \frac{n}{\delta} \right)
\ \ \implies \ \ 
\mu_{n}(B) \geq \frac{k}{n} .$$
\label{lemma:points-in-balls}
\end{lemma}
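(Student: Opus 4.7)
The plan is to derive this via the classical combination of a \emph{relative} (multiplicative) deviation inequality for a single set together with a VC-symmetrization argument. First, fix a measurable $B\subseteq\X$ with $\mu(B)=p$ and note that $n\mu_n(B)\sim\mathrm{Binomial}(n,p)$. The multiplicative Chernoff bound gives, for any $0<\beta<1$, $\Pr[n\mu_n(B)\le(1-\beta)np]\le \exp(-\beta^2 np/2)$. Parametrizing by $k=(1-\beta)np$, this rearranges to the statement that $\Pr[\mu_n(B)<k/n]$ decays as $\exp(-c(np-k)^2/(np))$ for an absolute constant $c>0$. A short case analysis then shows that whenever $np\ge k + c_o\max(k,t)$, the bound is at most $e^{-c't}$: in the regime $np\gtrsim k$ the gap $np-k$ is of order $k$ so the exponent is $\Omega(k)\ge \Omega(t)$, while in the regime $np\gg k$ the gap dominates and the exponent is $\Omega(t)$ directly.

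Next, to make this uniform over $B\in\B$, I would run the standard Vapnik--Chervonenkis symmetrization: the probability that uniform convergence fails on a sample of size $n$ is at most twice the probability that it fails on two independent samples of size $n$, and on $2n$ points the class $\B$ induces at most $\Pi_\B(2n)\le(2n)^{d_0}$ distinct subsets by Sauer--Shelah. Hence the single-set tail bound only has to be union-bounded over $(2n)^{d_0}$ events, which replaces the free parameter $t$ in the single-set bound by $t+d_0\log(2n)$. Finally, union-bounding over the at most $n$ relevant integer values of $k$ costs another $\log n$ factor that is absorbed. Choosing $t=\log(4/\delta^2)$ and collecting constants into a single $c_o$ produces precisely the sufficient condition $\mu(B)\ge k/n + (c_o/n)\max(k,\,d_0\log(n/\delta))$ with failure probability at most $\delta^2/2$.

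The main obstacle, and the reason an off-the-shelf Hoeffding-type bound does not suffice, is obtaining the \emph{relative} scaling uniformly over $\B$: an additive bound would yield $|\mu-\mu_n|\lesssim\sqrt{d_0\log(n/\delta)/n}$, which is far too loose when $\mu(B)$ is small (which is exactly the regime we need, since balls of radius $r_p(x)$ for small $p$ enter the analysis in Theorem~\ref{thm:pointwise-rate}). The right tool is therefore either the VC relative-deviation inequality or, equivalently, Bernstein's inequality combined with Sauer--Shelah and symmetrization; either route gives the $\max(k,\,d_0\log(n/\delta))$ behavior, which smoothly interpolates between the ``variance-dominated'' regime (small $k$, where $d_0\log(n/\delta)$ controls the deviation) and the ``linear'' regime (larger $k$, where $k$ itself is the larger term).
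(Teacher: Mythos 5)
Note first that the paper does not prove this lemma at all: it is quoted verbatim from Chaudhuri and Dasgupta (2010), Lemma~7, and used as a black box. So there is no in-paper proof to compare against; what I can say is how your sketch relates to the standard derivation of that result (which is what the cited source does), and to the closely parallel argument the paper \emph{does} carry out for Theorem~\ref{thm:UCECM}.

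Your route is the right one and is essentially the standard one: a multiplicative (relative-deviation) lower tail for a single set, made uniform over $\B$ by symmetrization plus Sauer--Shelah. The case analysis you describe is correct: with $p=\mu(B)$, if $np\ge k+c_o\max(k,t)$ then $(np-k)^2/np=\Omega(t)$ in both regimes, so the single-set tail is $e^{-\Omega(t)}$; paying the $(2n)^{d_0}$ factor from Sauer--Shelah then replaces $t$ by $t+O(d_0\log n)$, and choosing $t\asymp\log(1/\delta)$ gives the stated $\max\bigl(k,\,d_0\log(n/\delta)\bigr)$ form. You also correctly diagnose why an additive Hoeffding/VC bound is useless here: it only kicks in at $\mu(B)\gtrsim\sqrt{d_0/n}$, whereas the relative bound is meaningful already at $\mu(B)\gtrsim d_0\log n/n$, and the small-mass regime is exactly what the advantage-based analysis needs.

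Two refinements. First, the union bound over $k$ at the end is unnecessary and can be dropped entirely. The cleanest way to see the lemma is to establish the uniform relative-deviation inequality once, namely that with probability $1-\delta^2/2$,
\[
\mu_n(B)\ \ge\ \mu(B) - 2\sqrt{\mu(B)\,\frac{D}{n}}
\qquad\text{for all }B\in\B,
\quad\text{where } D = O\bigl(d_0\log(n/\delta)\bigr),
\]
and then observe that the implication in the lemma, for \emph{every} integer $k$ simultaneously, is a deterministic consequence of this single event: the hypothesis $\mu(B)\ge k/n+(c_o/n)\max(k,D)$ implies $\mu(B)-2\sqrt{\mu(B)D/n}\ge k/n$ by a short algebraic check (one finds $c_o\ge 2+2\sqrt2$ suffices). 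Your version, which union-bounds over the $n$ values of $k$, also works but pays an extra $\log n$ for no benefit. Second, it is worth flagging that the paper \emph{already contains} the machinery you are reconstructing: Theorem~\ref{thm:ucrel} (quoted from Boucheron--Bousquet--Lugosi) is precisely the packaged relative-deviation VC inequality, i.e.\ the output of your symmetrization-plus-Sauer--Shelah step, and the proof of Theorem~\ref{thm:UCECM} in Section~\ref{sec:ucecm} runs exactly this calculation. So you could have obtained Lemma~\ref{lemma:points-in-balls} as a corollary of Theorem~\ref{thm:ucrel} applied to $\cC=\B$, with no need to re-derive symmetrization from scratch.
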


\shay{Why do we use $1-\delta^2/2$ and not $1-\delta$? 
both choices are equivalent (by changing the constant $c_o$)
and $1-\delta$ reads better\ldots }

Likewise, we would expect the empirical bias $\eta_n(S)$ of a set $S \subset \X$, as defined in (\ref{eq:empirical-bias}), to be close to its true bias $\eta(S)$. The latter is defined whenever $\mu(S) > 0$.
\begin{lemma}
There is a universal constant $c_1$ for which the following holds. Let $\mathcal{C}$ be a class of subsets of $\X$ with VC dimension $d_0$. Pick any $0 < \delta < 1$. Then with probability at least $1-\delta^2/2$ over the choice of $(x_1, y_1), \ldots, (x_n, y_n)$, for all $C \in \mathcal{C}$,
  $$ \left| \eta_n(C) - \eta(C) \right| \ \leq \ \Delta(n, \#_n(C), \delta) $$
where $\#_n(C) = |\{i: x_i \in B\}|$ is the number of points in $C$ and 
\begin{equation}
\Delta(n,k,\delta) = c_1 \sqrt{\frac{d_0 \log n + \log (1/\delta)}{k}} .
\label{eq:delta-defn}
\end{equation}
\label{lemma:bias}
\end{lemma}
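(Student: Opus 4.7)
The plan is to lift the class $\mathcal{C}$ to the product space $\X\times\Y$, apply a relative-deviation VC inequality there, and recombine the pieces using a short algebraic identity. Define
\[
\mathcal{C}^+ = \{C\times\{+1\} : C\in\mathcal{C}\}, \qquad \mathcal{C}^- = \{C\times\{-1\} : C\in\mathcal{C}\},
\]
each of VC dimension~$d_0$. Since $(x_i,y_i)$ is an i.i.d.\ draw from $P$ on $\X\times\Y$, the relative Vapnik--Chervonenkis inequality that underlies Lemma~\ref{lemma:points-in-balls} — applied to $\mathcal{C}^+$, $\mathcal{C}^-$, and $\mathcal{C}$ itself, in both directions and with a union bound — yields that, with probability at least $1-\delta^2/2$, simultaneously for every $C\in\mathcal{C}$,
\[
\max\bigl(|\mu_n(C^+)-P(C^+)|,\ |\mu_n(C^-)-P(C^-)|,\ |\mu_n(C)-\mu(C)|\bigr) \ \lesssim\ \sqrt{\mu(C)\alpha/n} + \alpha/n,
\]
where $\alpha := d_0\log n + \log(1/\delta)$ and I have used $P(C^\pm)\leq\mu(C)$.

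Next I would exploit the identity
\[
\mu_n(C)\bigl(\eta_n(C)-\eta(C)\bigr) = \bigl(\mu_n(C^+)-P(C^+)\bigr) - \bigl(\mu_n(C^-)-P(C^-)\bigr) + \eta(C)\bigl(\mu(C)-\mu_n(C)\bigr),
\]
which follows from $\mu_n(C)\eta_n(C)=\mu_n(C^+)-\mu_n(C^-)$ and $\mu(C)\eta(C)=P(C^+)-P(C^-)$. Using $|\eta(C)|\leq 1$, the triangle inequality, and the three deviation bounds above, and dividing by $\mu_n(C)=\#_n(C)/n$, I obtain
\[
|\eta_n(C)-\eta(C)| \ \lesssim\ \frac{\sqrt{n\mu(C)\alpha}+\alpha}{\#_n(C)}.
\]
To eliminate $\mu(C)$, apply the relative bound once more in the reverse direction: on the same event $n\mu(C)\lesssim \#_n(C)+\alpha$, so $\sqrt{n\mu(C)\alpha}\lesssim \sqrt{\alpha\,\#_n(C)}+\alpha$, and hence $|\eta_n(C)-\eta(C)|\lesssim \sqrt{\alpha/\#_n(C)}+\alpha/\#_n(C)$. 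When $\#_n(C)\geq\alpha$ the linear term is absorbed into the square-root term; when $\#_n(C)<\alpha$ the target bound $c_1\sqrt{\alpha/\#_n(C)}$ already exceeds~$2$ for $c_1$ chosen large enough, and the inequality follows trivially from $|\eta_n(C)-\eta(C)|\leq 2$.

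The main obstacle is arranging the scaling so that $\#_n(C)$ — not $n$ — appears in the denominator of the final bound. A plain additive Hoeffding/VC bound on the lifted classes produces only $\sqrt{\alpha/n}$, which is weaker by a factor $\sqrt{n/\#_n(C)}$ and is useless when $C$ is a small ball, precisely the regime that the $\algname$ stopping rule probes. The cure is to use the relative form of the inequality in \emph{both} directions: first to replace $1/\sqrt n$ by $\sqrt{\mu(C)}/\sqrt n$ in the three deviation terms, and then to convert $n\mu(C)$ back into $\#_n(C)$ in the final substitution. Once both relative inequalities are in hand, the rest is bookkeeping and absorbing constants into~$c_1$.
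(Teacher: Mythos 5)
Your proposal is correct, but it takes a genuinely different route from the paper. The paper deduces Lemma~\ref{lemma:bias} as a special case of a more general result, Theorem~\ref{thm:UCECM}, which bounds $|P(A\vert B)-P_n(A\vert B)|$ uniformly over two VC classes $\cA,\cB$; that theorem is proved by forming the combined class $\cC=\cB\cup\{A\cap B\}$, invoking the relative-deviation inequality of Boucheron--Bousquet--Lugosi on $\cC$, and then estimating the ratio $P(A\cap B)/P(B)$ with two rounds of algebra on nested fractions (one inequality for the numerator, one for the denominator, then a Taylor bound on $1/(1-x)$). You instead stay in the product space $\X\times\Y$ and use the exact linear identity
\[
\mu_n(C)\bigl(\eta_n(C)-\eta(C)\bigr)=\bigl(\mu_n(C^+)-P(C^+)\bigr)-\bigl(\mu_n(C^-)-P(C^-)\bigr)+\eta(C)\bigl(\mu(C)-\mu_n(C)\bigr),
\]
which turns the conditional-probability problem into three \emph{unconditional} deviation terms and eliminates the ratio algebra entirely. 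Both routes rest on the same engine (the relative VC inequality, applied in both directions); the paper's calculation is messier but buys a more general statement (UCECM, for arbitrary $\cA,\cB$, which the authors flag as of independent interest), whereas yours is shorter and more transparent for this specific lemma. Your case split at $\#_n(C)<\alpha$, absorbed into the constant $c_1$ since $|\eta_n-\eta|\leq 2$, mirrors the paper's step of assuming $k\geq k_o$ without loss of generality. The argument as written is sound.
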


\Cref{lemma:bias} is a special case\footnote{{Indeed, \Cref{lemma:bias} follows from \Cref{thm:UCECM} 
by plugging in it $\cA = \{\X\times\{+1\}\}, \B = \{C\times\{\pm 1\} : C\in\mathcal{C}\}$.}} of a uniform convergence bound for conditional probabilities (\Cref{thm:UCECM}) 
that we present and prove in \Cref{sec:ucecm}.


\subsection{Proof of Theorem~\ref{thm:pointwise-rate}}

\begin{theorem*}[\Cref{thm:pointwise-rate} restatement] 
There is an absolute constant $C > 0$ for which the following holds.
Let $0 < \delta < 1$ denote the confidence parameter in the $\algname$ algorithm (\Cref{fig:alg}),
and suppose the algorithm is used to define a classifier $g_n$ based on $n$ training points chosen i.i.d.\ from $P$. 
Then, for every point $x\in\supp(\mu)$, if
\[n \geq \frac{C}{\adv(x)} \max \left( \log \frac{1}{\adv(x)}, \ \log \frac{1}{\delta} \right)\]
then with probability at least $1-\delta$ we have that $g_n(x)=g^*(x)$.
\end{theorem*}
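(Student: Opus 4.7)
}
Fix $x\in\supp(\mu)$ and, without loss of generality, assume $s:=g^*(x)=+1$. Since $\adv(x)>0$, I can pick $p,\gamma>0$ for which $x$ is $(p,\gamma)$-salient and $p\gamma^2$ is within a constant factor of $\adv(x)$ (formally, $p\gamma^2 \geq \adv(x)/2$ by the supremum in \eqref{eq:advantage}). Let $B^* := B(x,r_p(x))$; by salience, $\mu(B^*)\geq p$, $\eta(B^*)\geq\gamma$, and every ball $B(x,r)$ with $r<r_p(x)$ has strictly positive (true) bias.

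The key observation is that the algorithm's predictions depend only on the family $\mathcal{B}_x := \{B(x,r) : r\geq 0\}$, which is a chain and therefore has VC dimension $1$. I would apply Lemma~\ref{lemma:points-in-balls} and Lemma~\ref{lemma:bias} to $\mathcal{B}_x$ with $d_0=1$, so that the resulting confidence radius matches exactly the $\Delta(n,k,\delta)$ used by the algorithm in \eqref{eq:delta-default}. Taking a union bound yields a good event $E$, occurring with probability at least $1-\delta$, on which (i) $\#_n(B^*) \geq k^{*}$ for some $k^{*} \geq np/2$ (using Lemma~\ref{lemma:points-in-balls} applied to $B^*$, valid once $np \gtrsim \log(n/\delta)$), and (ii) every ball $B\in\mathcal{B}_x$ satisfies $|\eta_n(B)-\eta(B)|\leq \Delta(n,\#_n(B),\delta)$.

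On the event $E$, I would run the algorithm and argue as follows. Let $k^{*}=\#_n(B^*)$. Because $\mathcal{B}_x$ is nested, $B_{k^*}(x)$ is a ball centered at $x$ containing exactly the same training points as $B^*$ (handling ties via the footnote in Fig.~\ref{fig:alg}), so $\eta_n(B_{k^*}(x))=\eta_n(B^*) \geq \eta(B^*) - \Delta(n,k^*,\delta) \geq \gamma - \Delta(n,k^*,\delta)$. Under the hypothesis $n \geq C\adv(x)^{-1} \max(\log\tfrac{1}{\adv(x)},\log\tfrac{1}{\delta})$ with $C$ sufficiently large, solving the transcendental inequality $np\gamma^2 \gtrsim \log n + \log(1/\delta)$ yields $\gamma \geq 2\Delta(n,k^*,\delta)$, so $\eta_n(B_{k^*}(x)) > \Delta(n,k^*,\delta)$. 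Thus $k^*$ is a witness that the algorithm will not return ``?'' on $x$, and moreover the label it emits at $k^*$ is $+1$.

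It remains to verify that the algorithm does not prematurely return $-1$ for some $k < k^*$. For any such $k$, $B_k(x)\subseteq B^*$ has radius strictly less than $r_p(x)$, so by salience $\eta(B_k(x))>0$. The uniform bias bound on $\mathcal{B}_x$ then gives $\eta_n(B_k(x)) > -\Delta(n,k,\delta)$, which rules out a negative trigger; if the algorithm does trigger at some $k<k^*$, it must be with a positive sign, still yielding $g_n(x)=+1$. Combining, $g_n(x)=g^*(x)$ on the event $E$.

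I expect the main obstacle to be the threshold calculation that turns $np\gamma^2 \gtrsim \log n + \log(1/\delta)$ into the clean sample-size bound $n \geq (C/\adv(x))\max(\log\tfrac{1}{\adv(x)},\log\tfrac{1}{\delta})$, since $\log n$ appears implicitly on both sides; this is routine but requires choosing the constant $C$ after the fact. A secondary care point is handling distance ties, so that $B_{k^*}(x)$ is well-defined and coincides (as a set of training points) with $B^*$—this is addressed by the algorithm's convention of skipping ill-defined $k$, so only values of $k$ realized by some ball in $\mathcal{B}_x$ are considered.
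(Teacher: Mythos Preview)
Your proposal is correct and follows essentially the same route as the paper: exploit that the balls centered at $x$ have VC dimension $1$, apply Lemmas~\ref{lemma:points-in-balls} and~\ref{lemma:bias} to this family, show that $B^*=B(x,r_p(x))$ accumulates $\Theta(np)$ points and has empirical bias exceeding $\Delta$, and then rule out a premature negative trigger using $\eta(B(x,r))>0$ for $r<r_p(x)$. You are actually slightly more careful than the paper on two points (taking $(p,\gamma)$ near the supremum rather than assuming it is attained, and distinguishing $B_{k^*}(x)$ from $B^*$ when translating back to the algorithm's $k$-indexed balls), and you correctly flag the only nontrivial bookkeeping step, namely converting $n\cdot\adv(x)\gtrsim \log n+\log(1/\delta)$ into the stated threshold on~$n$.
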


\begin{proof}
Define $c_2 = \max(c_1, 1/2) \sqrt{1+c_o}$, where $c_o$ and $c_1$ are the constants from Lemmas~\ref{lemma:points-in-balls} and \ref{lemma:bias}, and take $c_3 = 16 c_2^2$.

Suppose $\eta(x) > 0$; the negative case is symmetric. The set $\B$ of all balls centered at $x$ is easily seen to have VC dimension $d_0 = 1$. By Lemmas~\ref{lemma:points-in-balls} and \ref{lemma:bias}, we have that with probability at least $1-\delta^2$, the following two properties hold for all $B \in \B$:
\begin{enumerate}
\item For any integer $k$, we have $\#_n(B) \geq k$ whenever $n \mu(B) \geq k + c_o \max(k, \log (n/\delta))$.
\item $|\eta_n(B) - \eta(B)| \leq \Delta(n, \#_n(B), \delta)$.
\end{enumerate}
Assume henceforth that these hold.

By the definition of advantage, point $x$ is $(p,\gamma)$-salient for some $p,\gamma> 0$ with $\adv(x) = p\gamma^2$. The lower bound on $n$ in the theorem statement implies that
\begin{equation}
\gamma \geq 2c_2 \sqrt{\frac{\log n + \log (1/\delta)}{np}} ,
\label{eq:gamma}
\end{equation}
or equivalently that $n \cdot \adv(x) \geq 4c_2^2 (\log n + \log (1/\delta))$.

Set $k = np/(1 + w)$. By (\ref{eq:gamma}) we have $np \geq 4 c_2^2 \log (n/\delta)$ and thus $k \geq \log (n/\delta)$. As a result, $np \geq k + w \max(k, \log (n/\delta))$, and by property 1, the ball $B = B(x, r_p(x))$ has $\#_n(B) \geq k$. This means, in turn, that by property 2,
\begin{align*}
\eta_n(B) &\geq \ \eta(B) - \Delta(n, k, \delta)
= \gamma - c_1 \sqrt{\frac{\log (n/\delta)}{k}} \\
&\geq 2c_2 \sqrt{\frac{\log (n/\delta)}{np}} - c_1 \sqrt{\frac{\log (n/\delta)}{k}}
\geq 2c_1 \sqrt{\frac{\log (n/\delta)}{k}} - c_1 \sqrt{\frac{\log (n/\delta)}{k}} \\
&= c_1 \sqrt{\frac{\log (n/\delta)}{k}} \geq \Delta(n, \#_n(B), \delta) .
\end{align*}
Thus ball $B$ would trigger a prediction of $+1$.

At the same time, for any ball $B' = B(x, r)$ with $r < r_p(x)$,
$$ \eta_n(B') \geq \eta(B') - \Delta(n, \#_n(B'), \delta) > -\Delta(n, \#_n(B'), \delta) $$
and thus no such ball will trigger a prediction of $-1$. Therefore, the prediction at $x$ must be $+1$.
\end{proof}

\subsection{Proof of Theorem~\ref{thm:uniform-rate}}

This proof follows much the same outline as that of Theorem~\ref{thm:pointwise-rate}. A crucial difference is that uniform large deviation bounds (Lemmas~\ref{lemma:points-in-balls} and \ref{lemma:bias}) are applied to the class of all balls in $\X$, which is assumed\footnote{This is motivated by finite-dimensional Euclidean space $\mathbb{R}^{D}$, where it holds with $d_0 = D+1$ (\cite{dudley79}).} to have finite VC dimension $d_0$. In contrast, the proof of Theorem~\ref{thm:pointwise-rate} only applies these bounds to the class of balls centered at a specific point, which has VC dimension at most 1 in any metric space.


\subsection{Proof of Theorem~\ref{thm:universal-consistency}}

Recall from (\ref{eq:advantage-set}) that $\X_a$ denotes the set of points with advantage $> a$.
\begin{lemma}
Pick any $0 < \delta < 1$ as a confidence parameter for the $\algname$ estimator of Figure~\ref{fig:alg}. Fix any $a > 0$. If the number of training points $n$ satisfies
$$ n \geq \frac{c_3}{a} \max\left(\log \frac{c_3}{a}, \ \log \frac{1}{\delta} \right), $$
then with probability at least $1-\delta$, the resulting classifier $g_n$ has risk
$$ R(g_n) - R^* \leq \delta + \mu(\X_0 \setminus \X_a) .$$
\label{lemma:advantage-set-convergence}
\end{lemma}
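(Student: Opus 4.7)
The plan is to reduce the lemma to the pointwise guarantee of Theorem~\ref{thm:pointwise-rate} through a Fubini--Markov argument, handling the region where the pointwise theorem does not apply by a crude mistake-mass bound.

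First, I would apply Theorem~\ref{thm:pointwise-rate} pointwise at each $x\in\X_a$, but with confidence parameter~$\delta^2$ (not $\delta$). Because $\adv(x)>a$, the lemma's hypothesis on $n$---with $c_3$ chosen a suitable constant factor larger than the $C$ of Theorem~\ref{thm:pointwise-rate}---dominates the lower bound required by that theorem at confidence $\delta^2$ (which contains the extra factor $\log(1/\delta^2)=2\log(1/\delta)$). Thus for each fixed $x\in\X_a$,
\[\Pr_{\text{sample}}\!\bigl[\,g_n(x)\neq g^*(x)\bigr] \;\leq\; \delta^2.\]
Integrating against $\mu$ over $\X_a$ and exchanging order of integration by Fubini,
\[\mathbb{E}_{\text{sample}}\!\bigl[\mu(\{x\in\X_a : g_n(x)\neq g^*(x)\})\bigr] \;\leq\; \delta^2\,\mu(\X_a) \;\leq\; \delta^2,\]
and Markov's inequality then gives, with probability at least $1-\delta$ over the sample,
\[\mu\!\bigl(\{x\in\X_a : g_n(x)\neq g^*(x)\}\bigr) \;\leq\; \delta.\]

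Next I would convert this mistake-mass bound into an excess-risk bound. Because $|\eta|\leq 1$ and the risk (\ref{eq:risk}) counts ``?'' as an error, the pointwise excess risk is bounded above by the indicator $\mathbf{1}[g_n(x)\neq g^*(x)]$, so
\[R(g_n)-R^* \;\leq\; \mu\!\bigl(\{x\in\X_a : g_n(x)\neq g^*(x)\}\bigr) \;+\; \mu(\X\setminus\X_a).\]
Decomposing $\X\setminus\X_a = (\X_0\setminus\X_a)\cup(\X\setminus\X_0)$ and plugging in the Markov step yields $R(g_n)-R^* \leq \delta + \mu(\X_0\setminus\X_a) + \mu(\X\setminus\X_0)$ on the good event. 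The last term vanishes in the applications invoking the Lebesgue differentiation condition (Lemma~\ref{lemma:positive-advantage}), recovering the stated bound.

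The main obstacle is the correct calibration of the two confidences: to arrive at a single $1-\delta$-probability event on which the mistake mass is at most $\delta$, the per-point failure probability must be squared to $\delta^2$, which is exactly what forces $c_3$ to exceed $C$ by a constant factor so that $\log(1/\delta^2)=2\log(1/\delta)$ still fits inside the hypothesized lower bound on $n$. Apart from this bookkeeping and the mild caveat regarding $\mu(\X\setminus\X_0)$, the proof is a mechanical integration argument on top of Theorem~\ref{thm:pointwise-rate}.
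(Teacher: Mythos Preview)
Your Fubini--Markov reduction to Theorem~\ref{thm:pointwise-rate} is exactly the paper's approach. Two points, however, need adjusting.

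First, the $\delta^2$ maneuver as you phrase it does not work: in Theorem~\ref{thm:pointwise-rate} the parameter $\delta$ \emph{is} the algorithm's confidence parameter, so ``applying the theorem with confidence parameter $\delta^2$'' would analyze a different classifier than the $g_n$ of the lemma (which is built with parameter $\delta$). What the paper actually uses---without flagging it explicitly---is that the \emph{proof} of Theorem~\ref{thm:pointwise-rate} already delivers the stronger bound $\Pr[g_n(x)\neq g^*(x)]\le \delta^2$ for the algorithm run with parameter $\delta$, since Lemmas~\ref{lemma:points-in-balls} and~\ref{lemma:bias} each hold with failure probability $\delta^2/2$. With that observation in hand, no inflation of $c_3$ is needed to absorb a factor $\log(1/\delta^2)=2\log(1/\delta)$.

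Second, your final decomposition leaves the residual $\mu(\X\setminus\X_0)$, and Lemma~\ref{lemma:positive-advantage} does \emph{not} make this vanish: it only kills $\mu(\{\eta\neq 0,\ \adv=0\})$, whereas $\X\setminus\X_0$ also contains $\{\eta=0\}$, whose mass can be positive. The paper handles this differently: it observes that at points with $\eta(x)=0$ any prediction in $\{\pm 1\}$ attains the Bayes risk, so such points contribute nothing to $R(g_n)-R^*$ and are excised from the mistake-mass accounting at the outset. The remaining ``bad'' set is then $\{X\notin\X_a,\ \eta(X)\neq 0\}$, which Lemma~\ref{lemma:positive-advantage} reduces to $\mu(\X_0\setminus\X_a)$.
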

\begin{proof}
From Theorem~\ref{thm:pointwise-rate} \akshay{TODO This lemma should be restated using the constant $C$ from Thm. \ref{thm:pointwise-rate} instead of $c_3$}, we have that for any $x \in \X_a$, 
$$ \pr_n(g_n(x) \neq g^*(x)) \leq \delta^2 ,$$
where $\pr_n$ denotes probability over the choice of training points. Thus, for $X \sim \mu$,
$$ \E_n \E_X 1(g_n(X) \neq g^*(X) | X \in \X_a) \leq \delta^2 ,$$
and by Markov's inequality,
$$ \pr_n [\pr_X (g_n(X) \neq g^*(X) |  X \in \X_a) \geq \delta] \leq \delta.$$
Thus, with probability at least $1-\delta$ over the training set,
$$\pr_X (g_n(X) \neq g^*(X) |  X \in \X_a) \leq \delta .$$
On points with $\eta(x) = 0$, both $g_n$ and the Bayes-optimal $g^*$ incur the same risk. Thus
\begin{align*}
R(g_n) - R^*
&\leq \pr_X(g_n(X) \neq g^*(X) | X \in \X_a) + \pr_X(X \not\in \X_a, \eta(X) \neq 0) \\ 
&\leq \delta + \pr_X(X \in \X_0 \setminus \X_a) + \pr_X(\adv(X) = 0, \eta(X) \neq 0) \\ 
&\leq \delta + \mu(\X_0 \setminus \X_a),
\end{align*}
where we invoke Lemma~\ref{lemma:positive-advantage} for the last step.
\end{proof}

We now complete the proof of Theorem~\ref{thm:universal-consistency}. Given the sequence of confidence parameters $(\delta_n)$, define a sequence of advantage values $(a_n)$ by
$$ a_n = \frac{c_3}{n} \max \left( 2 \log n, \ \log \frac{1}{\delta_n} \right) .$$
The conditions on $(\delta_n)$ imply $a_n \rightarrow 0$.

Pick any $\epsilon > 0$. By the conditions on $(\delta_n)$, we can pick $N$ so that $\sum_{n \geq N} \delta_n \leq \epsilon$. Let $\omega$ denote a realization of an infinite training sequence $(X_1, Y_1), (X_2, Y_2), \ldots$ from $P$. By Lemma~\ref{lemma:advantage-set-convergence}, for any positive integer $N$,
$$ \pr \left(\omega: \exists n \geq N \mbox{\ s.t.\ } R(g_n(\omega)) - R^* > \delta_n + \mu(\X_0 \setminus \X_{a_n}) \right) \leq \sum_{n \geq N} \delta_n \leq \epsilon.$$
Thus, with probability at least $1-\epsilon$ over the training sequence $\omega$, we have that for all $n \geq N$,
$$ R(g_n(\omega)) - R^* \leq \delta_n + \mu(\X_0 \setminus \X_{a_n}) ,$$
whereupon $R(g_n(\omega)) \rightarrow R^*$ (since $\delta_n, a_n \rightarrow 0$ and $\lim_{a \downarrow 0} \mu(\X_0 \setminus \X_a) = 0$). Since this holds for any $\epsilon  >0$, the theorem follows.

\section{Uniform Convergence of Empirical Conditional Measures}
\label{sec:ucecm}

\subsection{Formal Statement}

Let $P$ be a distribution over $X$, and let $\cA,\cB$ be two collections of events.
Consider $n$ independent samples from~$P$, denoted by~$x_1,\ldots,x_n$.
We would like to estimate $P(A \vert B)$ simultaneously for all~$A\in\cA, B\in \cB$.
It is natural to consider the empirical estimates:
\[P_n(A\vert B)=\frac{\sum_i 1_{[x_i\in A \cap B]}}{\sum_i 1_{[x_i\in B]}}.\]
We study when (and to what extent) these estimates provide a good approximation.
Note that the case where $\cB=\{X\}$ (i.e., in which one estimates $P(A)$
using $P_n(A)$ simultaneously for all $A\in\cA$) is handled by the classical VC theory.
Throughout this section we assume that both $\cA,\cB$ have a finite VC-dimension, 
and we let $d_0$ denote an upper bound on both $\mathsf{VC}(\cA)$ and $\mathsf{VC}(\cB)$.

To demonstrate the kinds of statements we would like to derive,
consider the case where each of~$\cA,\cB$ contains only one event:
$\cA=\{A\}$, and $\cB=\{B\}$,
and set $\#_n(B)=\sum_i 1_{[x_i\in B]}$.
A Chernoff bound implies that conditioned on the event that $\#_n(B)>0$, 
the following holds with probability at least~$1-\delta$:
\begin{equation}\label{eq:chernoff}
\left\lvert P(A\vert B) - P_n(A \vert B) \right\rvert \leq \sqrt{\frac{2\log(1/\delta)}{\#_n(B)}}.
\end{equation}
To derive it, use that conditioned on $x_i\in B$, the event $x_i\in A$ has probability $P(A\vert B)$, 
and therefore the random variable ``$\#_n(B)\cdot p_n(A \vert B)$'' has a binomial distribution with parameters $\#_n(B)$ and $P(A\vert B)$.

Note that the bound on the error in \Cref{eq:chernoff} depends on $\#_n(B)$ and therefore is data-dependent.
We stress that this is the type of statement we want:
the more samples belong to $B$, the more certain we are with the empirical estimate.
Thus, we would want to prove a statement as follows:

With probability at least~$1-\delta$,
\[\left(\forall A\in\cA\right)\left(\forall B\in\B\right):\left\lvert P(A \vert B) - P_n(A \vert B) \right\rvert \leq O\left(\sqrt{\frac{d_0 \log(1/\delta)}{\#_n(B)}}\right),\]
where $\#_n(B) = \sum_{i=1}^n 1[x_i\in B]$.

The above statement is, unfortunately, false. 
As an example, consider the probability space defined by drawing $x \sim[n]$ uniformly,
and then coloring $x$ by $c_x\in\{\pm 1\}$ uniformly.
For each $i$ let $B_i$ denote the event that~$i$ was drawn,
and let $A$ denote the event that the drawn color was  $+1$.
(formally, $B_i = \{i\}\times\{\pm 1\}$, and $A=[n]\times\{+1\}$).
One can verify that the VC dimension of $\B=\{B_i : i\leq n\}$ and of $\cA=\{A\}$ is at most $1$.
The above statement fails in this setting:
indeed, one can verify that if we draw $n$ samples from this space 
then with a constant probability there will be some  $j$
such that: 
\begin{itemize}
\item[(i)] $j$ always gets the same color (say $+1$), and 
\item[(ii)] $j$ is sampled at least $\Omega(\log n/\log\log n)$ times\footnote{{This follows from analyzing the maximal bin
in a uniform assignment of $\Theta(n)$ balls into $n$ bins~\cite{bins}}}.
\end{itemize}
Therefore, with constant probability we get that 
\[P_n(A\vert B_i) = 1, P(A\vert B_i)=1/2,\]
and so the difference between the error is clearly $1-(1/2)=1/2$,
which is clearly not upper bounded by $O(\sqrt{\log\log n/\log n})$.

We prove the following (slightly weaker) variant:
\begin{theorem}[UCECM]\label{thm:UCECM}
Let $P$ be a probability distribution over $X$, and let $\cA,\cB$
be two families of measurable subsets of $X$ such that $\mathsf{VC}(\cA),\mathsf{VC}(\cB)\leq d_0$.
Let $n\in\mathbb{N}$, and let $x_1\ldots x_n$ be $n$ i.i.d samples from $P$.
The, the following event occurs with probability at least $1-\delta$:
\[\left(\forall A\in\cA\right)\left(\forall B\in\B\right):\left\lvert P(A \vert B) - P_n(A \vert B) \right\rvert \leq 
\sqrt{\frac{k_o}{\#_n(B)}},\]
where $k_o = 1000 \left(d_0 \log(8n) + \log(4/\delta)\right)$, and\footnote{Note that the above inequality makes sense also when $k(B)=0$,
by identifying $\frac{\cdot}{0}$ as $\infty$, and using the convention that $\infty-\infty=\infty$ and that $\infty\leq \infty$.} $\#_n(B) = \sum_{i=1}^n 1[x_i\in B]$.
\end{theorem}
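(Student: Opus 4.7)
The plan is to deduce \Cref{thm:UCECM} from a standard \emph{relative} (multiplicative) VC uniform convergence inequality for unconditional probabilities, combined with the algebraic identity
\[
P_n(A\vert B) - P(A\vert B) \ = \ \frac{P_n(A\cap B) - P(A\cap B)}{P_n(B)} \, + \, P(A\vert B) \cdot \frac{P(B) - P_n(B)}{P_n(B)}.
\]
The key ingredient is Vapnik's relative VC inequality: for any class $\cC$ of measurable subsets of $X$, with probability at least $1-\delta$ every $C\in\cC$ satisfies $|P(C) - P_n(C)| \leq c_0 \sqrt{L \cdot \max(P(C), P_n(C))/n}$, where $L = \log \Pi_\cC(2n) + \log(1/\delta)$, $\Pi_\cC$ is the growth function of $\cC$, and $c_0$ is an absolute constant. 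I would apply this (via a union bound, each with confidence $1-\delta/2$) to $\cC = \cB$ and to the intersection class $\cA \sqcap \cB = \{A\cap B : A\in\cA, B\in\cB\}$. Since the VC dimensions of $\cA$ and $\cB$ are at most $d_0$, Sauer-Shelah gives $\Pi_\cA(m), \Pi_\cB(m) \leq (em/d_0)^{d_0}$, and the product bound $\Pi_{\cA\sqcap\cB}(m) \leq \Pi_\cA(m) \Pi_\cB(m)$ yields $\Pi_{\cA\sqcap\cB}(m) \leq (em/d_0)^{2d_0}$. Hence in both applications $L \leq C_1(d_0 \log n + \log(1/\delta)) \leq k_o/500$ for a suitable absolute constant $C_1$.

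Next, I would case-split on $\#_n(B)$. If $\#_n(B) < k_o$ the claim is vacuous, since $\sqrt{k_o/\#_n(B)} \geq 1$ while $|P_n(A\vert B) - P(A\vert B)| \in [0,1]$. Otherwise $\#_n(B) \geq k_o$, and a short contradiction shows $P(B) \leq 2 P_n(B)$: if instead $P(B) > 2 P_n(B)$, then $|P(B) - P_n(B)| > P(B)/2$, which combined with the relative VC bound on $\cB$ forces $\sqrt{P(B)} = O(\sqrt{L/n})$, hence $P(B) = O(L/n)$ and therefore $\#_n(B) \leq n P_n(B) < n P(B)/2 = O(L) \leq k_o/500$, contradicting $\#_n(B) \geq k_o$.

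With $P(B)$ and $P_n(B)$ now within a factor of two, I would plug both relative VC bounds into the displayed identity above. Since $A\cap B \subseteq B$, both $P(A\cap B)$ and $P_n(A\cap B)$ are dominated by $P(B)$ and $P_n(B)$ respectively, so each of the two numerators on the right-hand side is bounded by $c_0 \sqrt{L \cdot \max(P(B), P_n(B))/n} \leq c_0 \sqrt{2 L P_n(B)/n}$. Dividing by $P_n(B) = \#_n(B)/n$ and using $|P(A\vert B)| \leq 1$ on the second term gives
\[
|P_n(A\vert B) - P(A\vert B)| \ \leq \ 2 c_0 \sqrt{2 L / \#_n(B)} \ \leq \ \sqrt{k_o/\#_n(B)},
\]
as required, with the explicit constant $1000$ in $k_o$ chosen to absorb $c_0$ and $C_1$. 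The main obstacle is invoking the relative VC inequality with the correct growth-function dependence (especially for the intersection class $\cA \sqcap \cB$); this is classical (going back to Vapnik 1982) and follows from the usual symmetrization / double-sample / permutation argument applied to a normalized supremum, but in a self-contained write-up this is the step requiring the most care.
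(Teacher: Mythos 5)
Your proof is correct and follows essentially the same route as the paper's: a relative-deviation VC inequality applied to $\cB$ and the intersection class $\{A\cap B : A\in\cA, B\in\cB\}$ (bounded via Sauer--Shelah and the product bound on growth functions), followed by the reduction to $\#_n(B)\geq k_o$, on which the claim is otherwise vacuous. The only difference is algebraic bookkeeping: the paper applies the BBL05 form (square-root plus additive term), bounds $P(A\cap B)/P(B)$ by a ratio of perturbed empirical quantities, and expands $(1-x)^{-1}\leq 1+2x$, whereas you use the ``$\max(P,P_n)$'' form of the relative bound together with the decomposition identity and the auxiliary observation $P(B)\leq 2P_n(B)$---a mildly cleaner packaging of the same argument.
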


\paragraph{Discussion.}
\Cref{thm:UCECM} can be combined with \Cref{lemma:points-in-balls}
to yield a bound on the minimal $n$ for which $P_n(A \vert B)$ 
is a non-trivial approximation of $P(A \vert B)$.
Indeed, \Cref{lemma:points-in-balls} implies that 
if $n$ is large enough so that $P(B)=\Omega\left(\frac{d_0\log n}{n}\right)$, 
then the empirical estimate $P_n(A\vert B)$ is a decent approximation.
In the context of the adaptive nearest neighbor classifier, this means that the empirical
biases provide meaningful estimates of the true biases for balls whose measure is $\tilde\Omega\left(\frac{d_0}{n}\right)$.
This resembles the learning rate in realizable settings.
%
%

We remark that a weaker statement than \Cref{thm:UCECM}
can be derived as a corollary of the classical uniform convergence
result~\cite{vapnik1971uniform}. 
Indeed, since the VC dimension of $\{B\cap A : i\in \I\}$ is at most $d_0$, it follows that 
\[P_n(A\vert B)\approx\frac{P(A\cap B) \pm \sqrt{d_0 / n}}{P(B)\pm \sqrt{d_0 / n}}.\]
However, this bound guarantees non-trivial estimates only once $P(B)$ is roughly $\sqrt{d_0  / n}$.
This is similar to the learning rate in agnostic (i.e., non-realizable) settings.

Another major advantage of the uniform convergence bound in \Cref{thm:UCECM} is that it is data-dependent: 
if many points from the sample belong to $B\in \cB$ (i.e.\ $\#_n(B)$ is large), 
then we get better guarantees on the approximation of $P(A\vert B)$ by $P_n(A\vert B)$ for all $A\in\cA$.

\subsection{Proof of \Cref{thm:UCECM}}

As noted above,
the standard uniform convergence bound for VC classes
can not yield the bound in \Cref{thm:UCECM}.
Instead, we use a variant of it due to~\cite{BBL05} which concerns {\it relative deviations}
(see~\cite{BBL05}: Theorem 5.1 and the discussion before Corollary 5.2).
In order to state the theorem, we need the following notation:
Let $\cC$ be a family of subsets of $\X$. We denote by $\mathbb{S}_\cC:\mathbb{N}\to\mathbb{N}$ the {\it growth function} of $\cC$, which is defined by:
\[
\mathbb{S}_\cC(n) = \max\{\lvert \cC|_R\rvert : R\subseteq X, \lvert R\rvert=n\},
\]
where $\cC|_R=\{C\cap R : C\in\cC\}$ is the projection of $\cC$ to $R$.
\begin{theorem}[\cite{BBL05}]\label{thm:ucrel}
Let $\cC$ be a family of subsets of $\X$  and let $P$
be a distribution over $\X$. Then, the following holds with probability $1-\delta$:
\[
(\forall C\in \cC): \lvert P(C)- P_n(C) \rvert \leq  2\sqrt{P_n(C)\frac{\log\mathbb{S}_\cC(2n) + \log(4/\delta)}{n}} + 4\frac{\log\mathbb{S}_\cC(2n) + \log(4/\delta)}{n}. 
\]
\end{theorem}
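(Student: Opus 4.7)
The plan is to reduce the conditional-probability bound to two applications of \Cref{thm:ucrel}: one to $\cB$ itself (to control $P_n(B)$ versus $P(B)$), and one to the intersection class $\cC := \{A\cap B : A\in\cA,\ B\in\cB\}$ (to control $P_n(A\cap B)$ versus $P(A\cap B)$). Since the restriction of $\cC$ to any $m$-point set sits inside the pairwise intersections of the restrictions of $\cA$ and $\cB$, we have $\mathbb{S}_{\cC}(m) \leq \mathbb{S}_{\cA}(m)\,\mathbb{S}_{\cB}(m)$, so Sauer--Shelah gives $\log \mathbb{S}_{\cC}(2n) = O(d_0\log n)$, of the same order as $\log \mathbb{S}_{\cB}(2n)$. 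Write $L = \Theta(d_0\log(8n)+\log(1/\delta))$ for the common rate. Applying \Cref{thm:ucrel} to each of $\cB$ and $\cC$ at confidence $\delta/2$ and union-bounding yields, with probability $\geq 1-\delta$, the uniform deviation bounds
\[|\alpha| \leq 2\sqrt{P_n(B)\,L/n} + 4L/n \quad \text{and}\quad |\beta| \leq 2\sqrt{P_n(A\cap B)\,L/n} + 4L/n,\]
where $\alpha := P(B)-P_n(B)$ and $\beta := P(A\cap B)-P_n(A\cap B)$.

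\textbf{Trivial regime and a lower bound on $P(B)$.} If $\#_n(B)\leq k_o$, the claim is immediate: $\sqrt{k_o/\#_n(B)}\geq 1$, while $|P(A\mid B)-P_n(A\mid B)|\leq 1$. So assume $\#_n(B) > k_o$. The large constant $1000$ in $k_o$ ensures $\#_n(B)$ dominates $L$ by a big margin, which via the first bound above forces $|\alpha| \leq P_n(B)/2$, and hence $P(B) \geq P_n(B)/2 = \#_n(B)/(2n)$.

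\textbf{Algebraic conclusion.} A direct expansion of $P(A\mid B)-P_n(A\mid B)$ in terms of $\alpha,\beta$ gives the identity
\[P(A\mid B)-P_n(A\mid B) \;=\; \frac{\beta}{P(B)} \;-\; \frac{\alpha\, P_n(A\mid B)}{P(B)},\]
so using $P_n(A\mid B)\leq 1$ and the lower bound on $P(B)$,
\[\bigl|P(A\mid B)-P_n(A\mid B)\bigr| \;\leq\; \frac{|\alpha|+|\beta|}{P(B)} \;\leq\; \frac{2n(|\alpha|+|\beta|)}{\#_n(B)}.\]
Since $P_n(A\cap B) \leq P_n(B) = \#_n(B)/n$, both $|\alpha|$ and $|\beta|$ are at most $2\sqrt{\#_n(B)\,L}/n + 4L/n$, and in the regime $\#_n(B)\gg L$ the additive $4L/n$ tail is absorbed into the square-root term. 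Substituting yields $|P(A\mid B)-P_n(A\mid B)| = O(\sqrt{L/\#_n(B)})$, and the constant $1000$ in $k_o$ is chosen precisely to swallow all accumulated constants into the stated bound $\sqrt{k_o/\#_n(B)}$.

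\textbf{Main obstacle.} There is no conceptual barrier beyond invoking \Cref{thm:ucrel}; the technical core is packaged into that relative-deviation result. The real work is arithmetic bookkeeping: propagating constants through the division by $P(B)$, through the Sauer--Shelah expansion of $\cC$ (which blows $d_0$ up to at most $2d_0$), and verifying that the factor $1000$ leaves enough slack for the tail term $4L/n$ to be absorbed in the square-root term. This is routine but must be carried out carefully.
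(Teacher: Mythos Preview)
There is a fundamental mismatch: your proposal does not prove the stated theorem. \Cref{thm:ucrel} is the relative-deviation VC inequality of Boucheron--Bousquet--Lugosi, and the paper does not prove it---it is imported from \cite{BBL05} as a black box. What you have written is instead a proof sketch of \Cref{thm:UCECM} (the UCECM bound on conditional probabilities) that \emph{uses} \Cref{thm:ucrel} as its main tool. A proof of \Cref{thm:ucrel} itself would require the symmetrization/ghost-sample argument together with a peeling or self-bounding step to obtain the $\sqrt{P_n(C)}$ factor in place of the vanilla additive VC bound; none of that appears in your write-up.

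If we instead read your proposal as a proof of \Cref{thm:UCECM}, it is correct and close in spirit to the paper's argument, though organized differently. The paper applies \Cref{thm:ucrel} once to the single combined class $\cB \cup \{A\cap B : A\in\cA,\ B\in\cB\}$, then bounds the ratio $P(A\cap B)/P(B)$ directly by sandwiching numerator and denominator with the relative-deviation inequality and expanding the resulting fraction of the form $(1+\cdots)/(1-\cdots)$. You instead apply \Cref{thm:ucrel} twice (to $\cB$ and to the intersection class separately), union-bound, and use the additive identity $P(A\mid B)-P_n(A\mid B) = \beta/P(B) - \alpha\, P_n(A\mid B)/P(B)$. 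Your route avoids the multiplicative expansion and is arguably cleaner, at the cost of one extra union-bound; both paths land on the same $O\bigl(\sqrt{L/\#_n(B)}\bigr)$ conclusion with comparable constant-tracking effort.
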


Set $\cC = \cB\cup \{A\cap B : A\in\cA, B\in\B\}$. 
We prove \Cref{thm:UCECM} by applying \Cref{thm:ucrel} on $\cC$;
to this end we first upper bound $\mathbb{S}_\cC(n)$.
Let $\mathcal{D}= \{A\cap B :A\in\cA, B\in\cB\}$, so that $\cC = \cB \cup \mathcal{D}$. Then:
\begin{align*}
\mathbb{S}_\cC(n) 
\leq \mathbb{S}_\cB(n) + \mathbb{S}_{\mathcal{D}}(n) \leq \mathbb{S}_\cB(n)  +  \mathbb{S}_\cA(n)\mathbb{S}_\cB(n)\leq 2\mathbb{S}_\cA(n)\mathbb{S}_\cB(n)\leq 2{n \choose \leq d_0}^2\leq 2 (2n)^{2d_0},
\end{align*}
where the second inequality follows since $\mathbb{S}_{\mathcal{D}}(n) \leq \mathbb{S}_\cA(n)\mathbb{S}_\cB(n)$,
the second to last inequality follows from the Sauer-Shelah-Perles Lemma, and the last inequality
follows since~${a \choose \leq b} \leq (2a)^b$.
Therefore, applying \Cref{thm:ucrel} on $\cC$ yields that with probability $1-\delta$ the following event holds:
\begin{equation}\label{eq:BBL}
(\forall C\in \cC): \lvert P(C)- P_n(C) \rvert \leq  4\sqrt{P_n(C)\frac{d_0\log 8n + \log(4/\delta)}{n}} + 8\frac{d_0\log 8n + \log(4/\delta)}{n}. 
\end{equation}
For the remainder of the proof we assume that the event in \Cref{eq:BBL} holds and argue that it implies
the conclusion in \Cref{thm:UCECM}.
Let $A\in\cA, B\in\cB$,  
and let $k=n\cdot P_n(B)=\#_n(B)$ denote the number of data points in $B$. 
We want to show that
\begin{equation}\label{eq:qed}
\left\lvert P(A \vert B) - P_n(A \vert B) \right\rvert \leq 
\sqrt{\frac{k_o}{k}},
\end{equation}
where $k_o=1000 \left(d_0 \log(8n) + \log(4/\delta)\right)$.
Let $j=k\cdot P_n(A\vert B) = \#_n(A\cap B)$ denote the number of data points in~$A\cap B$.
We establish \Cref{eq:qed} by showing that 
\[P(A \vert B) \leq  P_n(A\vert B)  + \sqrt{\frac{k_o}{k}}
~~~\text{ and }~~~ P(A \vert B) \geq  P_n(A\vert B)  - \sqrt{\frac{k_o}{k}}.
\]
In the following calculation it will be convenient to denote $D:=d_0 \log(8n) + \log(4/\delta)$. 
By \Cref{eq:BBL} we get:
\begin{align*}
P(A \vert B) &= \frac{P(A\cap B)}{P(B)}\\
			 &\leq \frac{P_n(A\cap B) + 4\sqrt{P_n(A\cap B)\frac{D}{n}} + 8\frac{D}{n}}{P_n(B) - 4\sqrt{P_n(B)\frac{D}{n}} - 8\frac{D}{n}}\\
			 &=\frac{\frac{P_n(A\cap B)}{P_n(B)} + 4\sqrt{\frac{P_n(A\cap B)}{P_n(B)}\frac{D}{nP_n(B)}} + 8\frac{D}{nP_n(B)}}{1 - 4\sqrt{\frac{D}{nP_n(B)}} - 8\frac{D}{nP_n(B)}}s
			 =P_n(A \vert B)\frac{1+ 4\sqrt{\frac{D}{j}} + 8\frac{D}{j}}{1 - 4\sqrt{\frac{D}{k}} - 8\frac{D}{k}},
\end{align*}
where the first inequality follows from \Cref{eq:BBL} and the following equalities are trivial.
Thus,
\begin{equation}\label{eq:16}
P(A \vert B) \leq\frac{j}{k}\Biggl(\frac{1+ 4\sqrt{\frac{D}{j}} + 8\frac{D}{j}}{1 - 4\sqrt{\frac{D}{k}} - 8\frac{D}{k}}\Biggr).
\end{equation}
Next, note that we may assume that $k\geq k_o=1000D$, as otherwise \Cref{eq:qed} trivially holds. Therefore,
\begin{align*}
\frac{1}{1 - 4\sqrt{\frac{D}{k}} - 8\frac{D}{k}} \leq 
1 + 8\sqrt{\frac{D}{k}} + 16\frac{D}{k}. \tag{$(\forall x<\frac{1}{2}):\frac{1}{1-x} \leq 1+2x$}
\end{align*}
Plugging this in \Cref{eq:16}, and using first that $j\leq k$ and then that $1000D\leq k$, yields:
\begin{align*}
P(A \vert B) &\leq \frac{j}{k}\Bigl(1+ 4\sqrt{\frac{D}{j}} + 8\frac{D}{j}\Bigr)\Bigl(1 + 8\sqrt{\frac{D}{k}} + 16\frac{D}{k}\Bigr) \\
&= \frac{j}{k} + 8 \frac{j}{k} \sqrt{\frac{D}{k}} \left( 1 + 2 \sqrt{\frac{D}{k}} \right) + \Bigl( \frac{4\sqrt{j D} + 8D}{k}\Bigr) \Bigl(1 + 4\sqrt{\frac{D}{k}} \Bigr)^2 \\
&\leq \frac{j}{k} + 8 \sqrt{\frac{D}{k}} \left( 1 + 2 \sqrt{\frac{D}{k}} \right) + \Bigl( 4 \sqrt{\frac{D}{k}} + \frac{8D}{k}\Bigr) \Bigl(1 + 4\sqrt{\frac{D}{k}} \Bigr)^2 \\
&\leq \frac{j}{k} + 30\sqrt{\frac{D}{k}} = P_n(A\vert B)  + \sqrt{\frac{k_o}{k}},
\end{align*}
and so 
\[
P(A \vert B) \leq  P_n(A\vert B)  + \sqrt{\frac{k_o}{k}}.
\]
A symmetric argument yields similarly to \Cref{eq:16} that:
\begin{align*}
P(A \vert B) \geq\frac{j}{k}\Biggl(\frac{1 - 4\sqrt{\frac{D}{j}} - 8\frac{D}{j}}{1 + 4\sqrt{\frac{D}{k}} + 8\frac{D}{k}}\Biggr).
\end{align*}
Then, a similar calculation (using the relation $(\forall x > 0):\frac{1}{1+x} \geq 1-2x$) implies that
\[
P(A \vert B) \geq  P_n(A\vert B)  - \sqrt{\frac{k_o}{k}},
\]
which finishes the proof.
\qed

\section{Experimental Results}
\label{sec:experimentappendix}

\subsection{Datasets}

We test $\algname$ on several datasets. The first was the MNIST dataset of 70000 examples (\cite{MNIST}). \yoav{I tried centering the bounding box, but it made the results worse, so I abandoned it. I am using unaltered Euclidean distance on the public MNIST data.}

We also evaluate $\algname$ on the more challenging notMNIST dataset (\cite{notMNIST}), consisting of extracted glyphs of the letters A-J from publicly available fonts. We use the 18724 labeled examples from this set, preprocessed feature-wise to be in $[-\frac{1}{2}, \frac{1}{2}]$ using $x \mapsto \frac{x}{255} - \frac{1}{2}$.

We further use $\algname$ on a challenging binary classification task of independent and continuing interest, involving gene expression data on a population of single cells from different mouse organs collected by the Tabula Muris consortium (\cite{tabulamuris18}, as processed in \cite{MouseAtlasData}). This constitutes 45291 cells (training examples). Each cell has its data collected using one of two approaches. The task is to classify between them. More details follow.

The data are collected using representative protocols of the two currently dominant approaches to isolate and measure single cells: a ``plate"-based approach using microwells on a chip, and a ``droplet"-based approach manipulating cells within droplets using microfluidic technologies. 
Each approach has its own set of technical biases, about which much remains to be understood. Identifying and characterizing these biases to discriminate between such approaches is currently of great interest. 

Both approaches measure effectively the same cells for our purposes, so there is a large decision boundary in the binary classification problem.

\subsection{A note on efficient implementation}

In this paper, we computed the nearest neighbors of data exactly when running $\algname$, to faithfully demonstrate its behavior. 
In practice, this would be done using approximate nearest-neighbor search to build a $k$-NN graph using a small fixed $k$ (say 10), and then using pairwise distances on this graph to compute neighborhoods as needed by $\algname$. 
We tried this (using the nearest-neighbor method of \cite{DCL11}) on notMNIST without substantive differences in the results, and will release this implementation upon publication.

\subsection{Supplemental Figures}

\begin{figure}[th]
    \begin{subfigure}{0.48\textwidth}
    \centering
        \includegraphics[width=\linewidth]{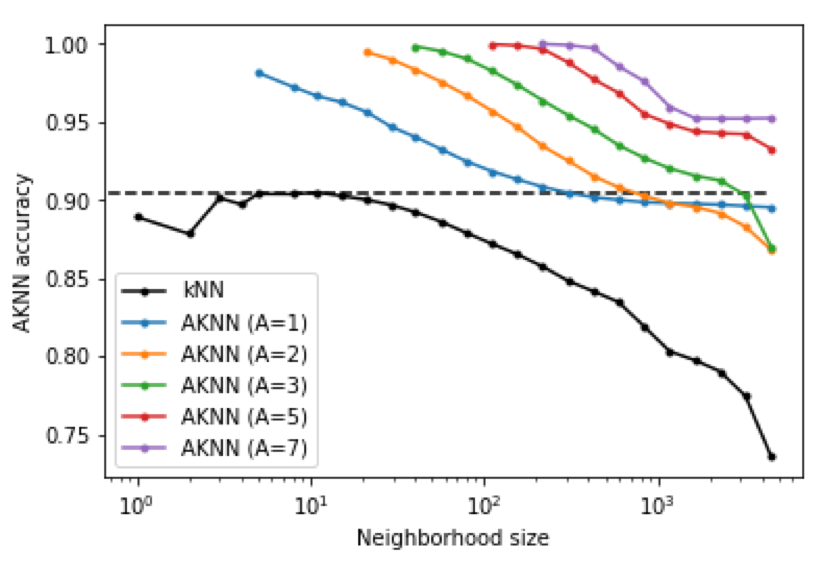}
    \end{subfigure}
    \begin{subfigure}{0.48\textwidth}
    \centering
        \includegraphics[width=\linewidth]{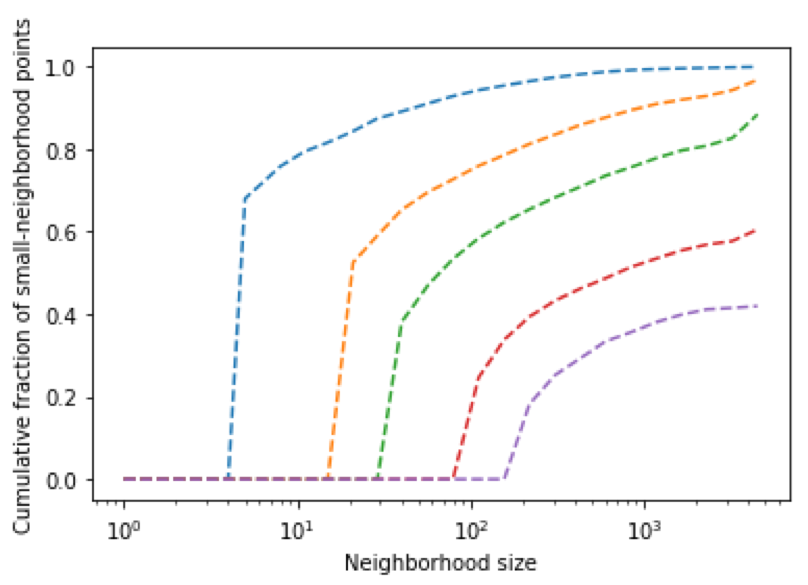}
    \end{subfigure}
    \caption{As Fig. \ref{fig:aknnvsknn}, on single-cell mouse data. $\algname$ is notably accurate on small-neighborhood points at moderate coverage, and performance drops off at higher $k$, with $A$ controlling this frontier. }
    \label{fig:aknnvsknn_muris}
\end{figure}

\begin{figure}
    \begin{subfigure}[t]{0.30\textwidth}
    \centering
        \includegraphics[width=\linewidth]{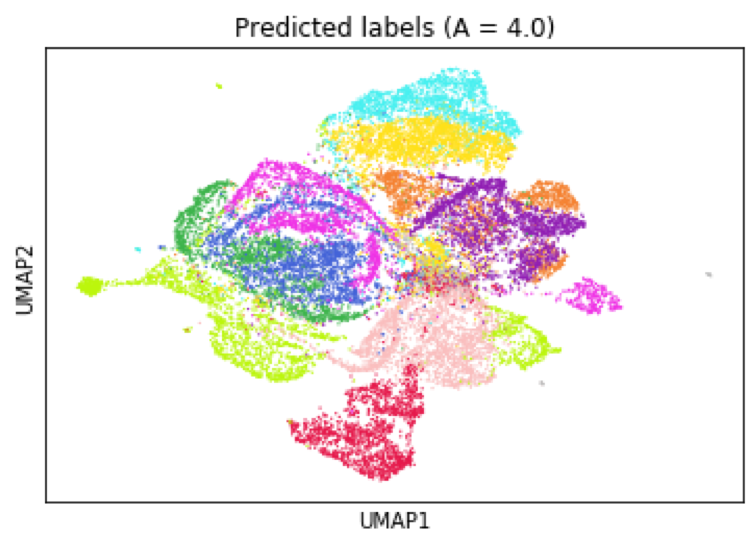} 
    \end{subfigure}
    \begin{subfigure}[t]{0.30\textwidth}
        \centering
        \includegraphics[width=\linewidth]{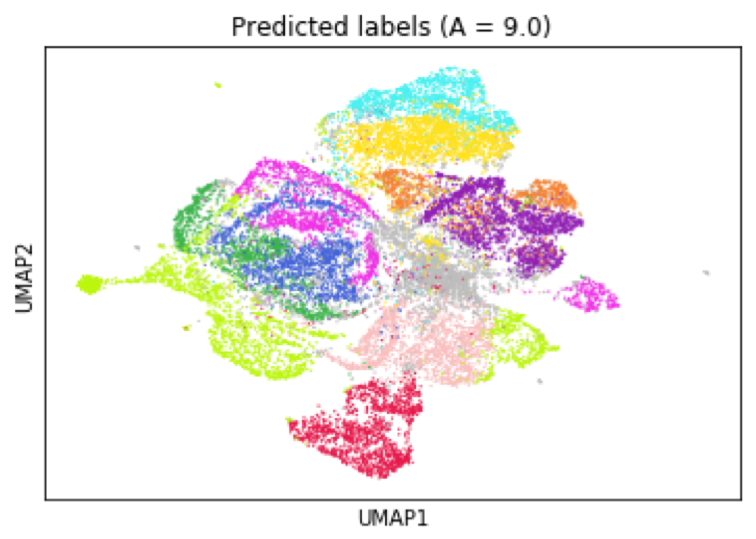} 
    \end{subfigure}
    \begin{subfigure}[t]{0.34\textwidth}
        \centering
        \includegraphics[width=\linewidth]{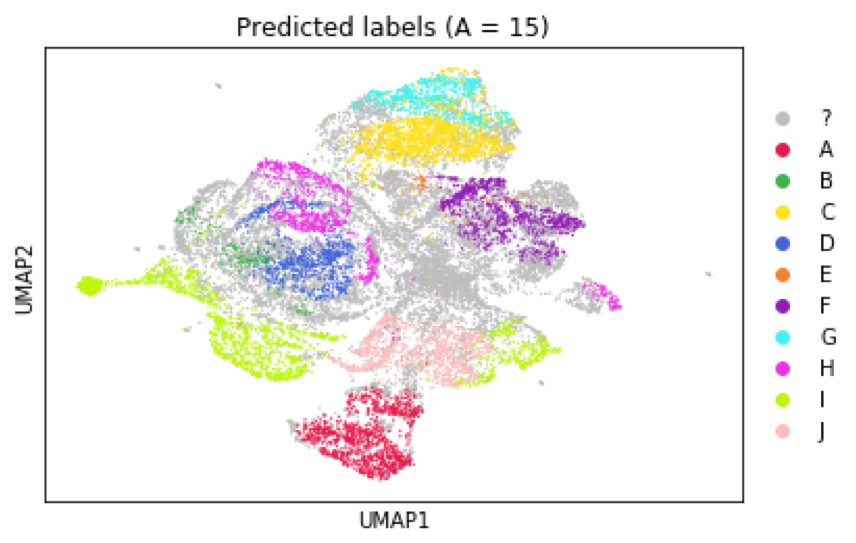} 
    \end{subfigure}
    \hfill
  \caption{$\algname$ predictions on notMNIST, for different settings of $A$.}
  \label{fig:varyingparam}
\end{figure}

\begin{figure}
    \begin{subfigure}[t]{0.32\textwidth}
    \centering
        \includegraphics[width=\linewidth]{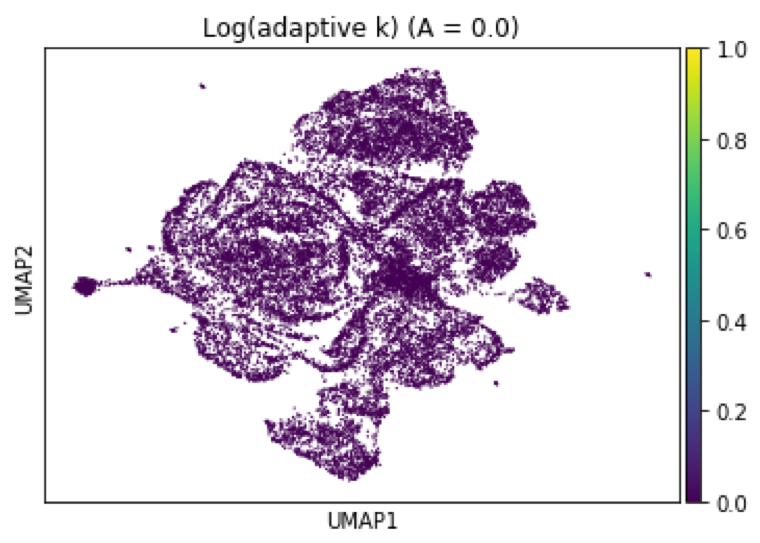} 
    \end{subfigure}
    \begin{subfigure}[t]{0.32\textwidth}
        \centering
        \includegraphics[width=\linewidth]{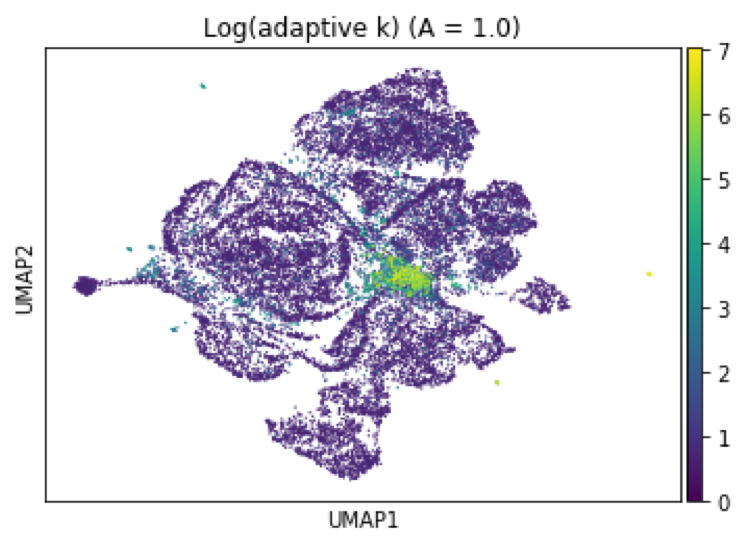} 
    \end{subfigure}
    \begin{subfigure}[t]{0.32\textwidth}
        \centering
        \includegraphics[width=\linewidth]{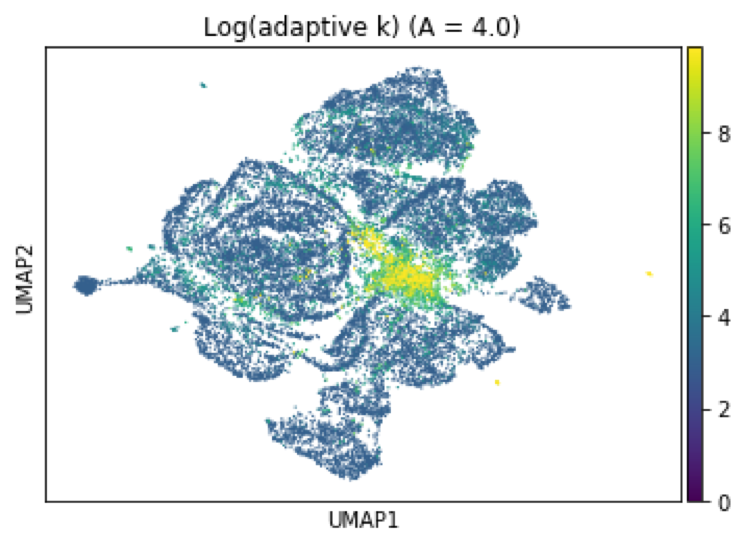} 
    \end{subfigure}
    \hfill
    
    \begin{subfigure}[t]{0.32\textwidth}
    \centering
        \includegraphics[width=\linewidth]{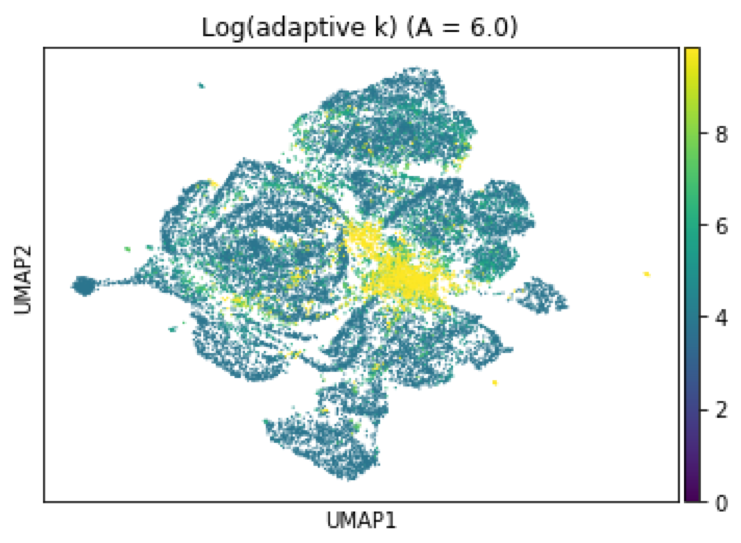} 
    \end{subfigure}
    \begin{subfigure}[t]{0.32\textwidth}
        \centering
        \includegraphics[width=\linewidth]{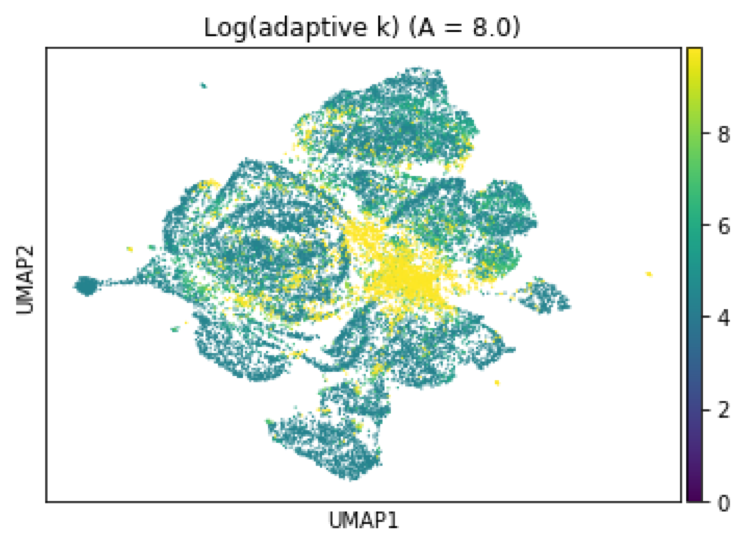} 
    \end{subfigure}
    \begin{subfigure}[t]{0.32\textwidth}
        \centering
        \includegraphics[width=\linewidth]{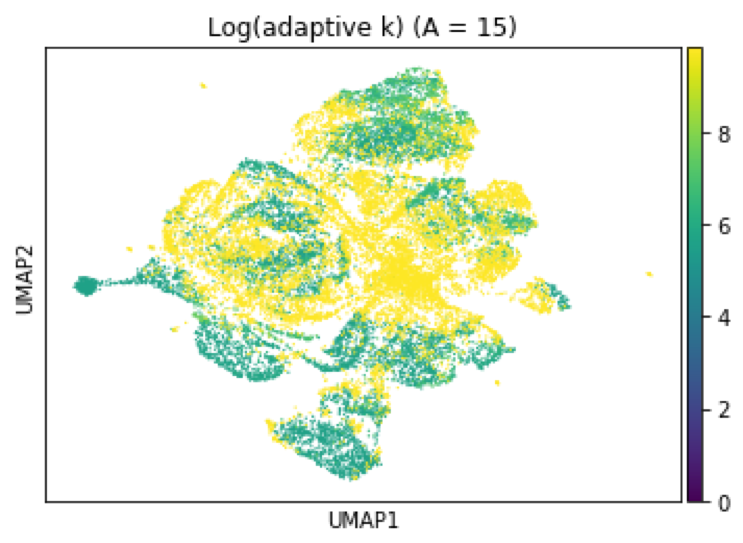} 
    \end{subfigure}
    \hfill
  \caption{$\algname$ neighborhood sizes on notMNIST, in increasing order of $A$, plotted on a log scale. Top left figure ($A = 0$) represents a $1$-NN classifier. Bottom right figure ($A = 15$) shows that many of the points' neighborhoods are maximally large, which can be compared to the right panel of Fig. \ref{fig:varyingparam}.}
  \label{fig:varyingadak_supp}
\end{figure}

    

\begin{figure}
    \begin{subfigure}[t]{0.47\textwidth}
        \centering
        \includegraphics[width=\linewidth]{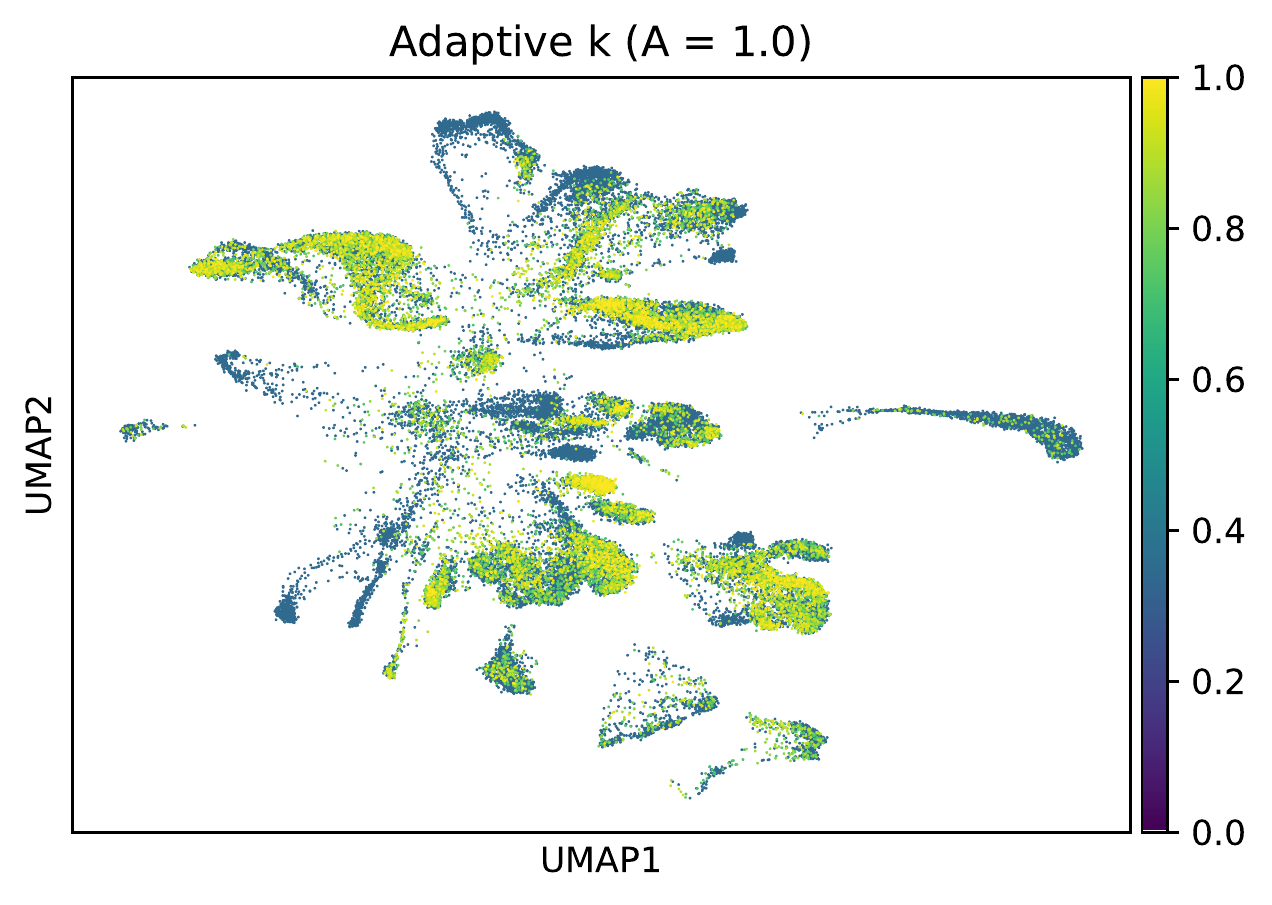} 
    \end{subfigure}
    \begin{subfigure}[t]{0.47\textwidth}
    \centering
        \includegraphics[width=\linewidth]{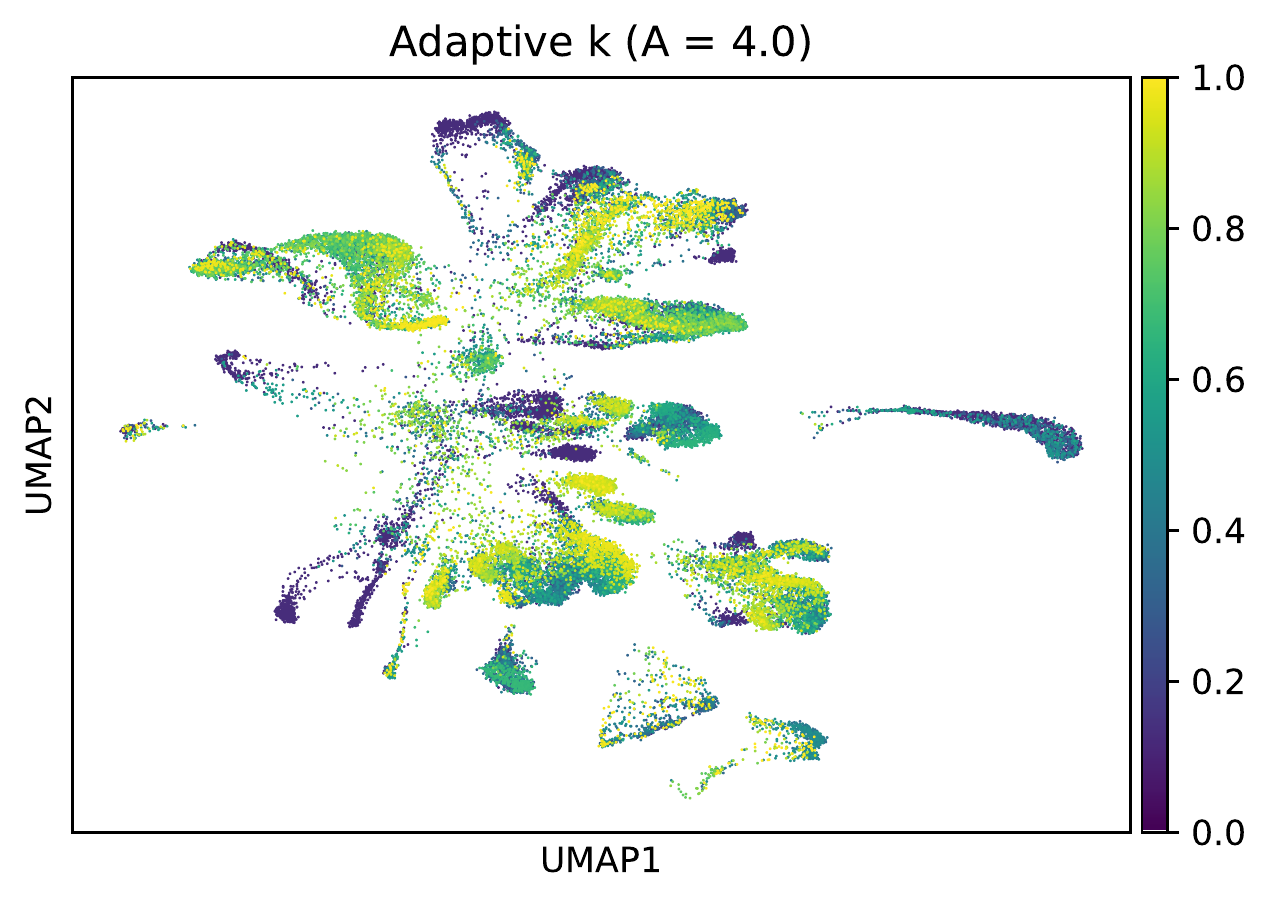} 
    \end{subfigure}
    \hfill
  \caption{As Fig. \ref{fig:varyingadak_supp}, on single-cell mouse data, with the $\algname$ k-values replaced by their quantiles over the data. The relative ordering of the data by $\algname$ neighborhood size is fairly robust to $A$. Zoom in or see \texttt{http://35.239.251.24/aknn/} for details.}
  \label{fig:varyingnormk_muris}
\end{figure}

\end{document}